\documentclass{article}
\usepackage{iclr2026_conference,times}
\usepackage{amsmath,amsfonts,bm}

\def\eqref#1{equation~\ref{#1}}

\def\1{\bm{1}}

\def\vtheta{{\bm{\theta}}}

\def\vk{{\bm{k}}}

\def\vv{{\bm{v}}}

\def\vx{{\bm{x}}}

\def\mA{{\bm{A}}}

\def\mW{{\bm{W}}}

\DeclareMathAlphabet{\mathsfit}{\encodingdefault}{\sfdefault}{m}{sl}
\SetMathAlphabet{\mathsfit}{bold}{\encodingdefault}{\sfdefault}{bx}{n}

\usepackage[utf8]{inputenc}
\usepackage[T1]{fontenc}
\usepackage{hyperref}
\usepackage{url}
\usepackage{booktabs}
\usepackage{amsfonts}
\usepackage{nicefrac}
\usepackage{microtype}
\usepackage{xcolor}
\usepackage{graphicx}
\usepackage{amsmath}
\usepackage{enumitem}
\usepackage{amsthm}
\newtheorem{theorem}{Theorem}[section]
\newtheorem{lemma}[theorem]{Lemma}
\newcommand{\mypar}[1]{\vspace{0.0em}\noindent\textbf{#1}\textbf{.}}
\usepackage{rotating}
\usepackage{array}
\usepackage{multirow}
\usepackage{siunitx}
\usepackage{placeins}
\usepackage{subcaption}
\sisetup{
    table-format=3.1,
    table-number-alignment=center,
    detect-weight=true,
    detect-inline-weight=math
}
\usepackage{wrapfig}
\usepackage{adjustbox}
\title{Continual Unlearning for Text-to-Image Diffusion Models: A Regularization Perspective}
\usepackage{amsmath,amsfonts,bm}
\newcommand\mypara[1]{\vspace{0.0mm}\noindent\textbf{#1}}

\definecolor{citecolor}{rgb}{0,0.08,0.45}
\definecolor{linkcolor}{RGB}{187,18,26}

\def\eqref#1{equation~\ref{#1}}

\def\1{\bm{1}}

\def\vtheta{{\bm{\theta}}}

\def\vk{{\bm{k}}}

\def\vv{{\bm{v}}}

\def\vx{{\bm{x}}}

\def\mA{{\bm{A}}}

\def\mW{{\bm{W}}}

\DeclareMathAlphabet{\mathsfit}{\encodingdefault}{\sfdefault}{m}{sl}
\SetMathAlphabet{\mathsfit}{bold}{\encodingdefault}{\sfdefault}{bx}{n}

\newcommand{\eg}{{\em e.g.}}
\newcommand{\ie}{{\em i.e.}}

\author{
  \textbf{Justin Lee}$^{1*}$ \quad
  \textbf{Zheda Mai}$^{1*}$ \quad
  \textbf{Jinsu Yoo}$^{1}$\quad
  \textbf{Chongyu Fan}$^{2}$ \\
  \textbf{Cheng Zhang}$^{3}$ \quad
  \textbf{Wei-Lun Chao}$^{1,4}$\\[4pt]
  $^{1}$The Ohio State University \quad
  $^{2}$Michigan State University\\
  $^{3}$Texas A\&M University \quad
  $^{4}$Boston University \quad
}
\iclrfinalcopy
\begin{document}
\maketitle
\FloatBarrier
\vskip -10pt
\begin{abstract}
Machine unlearning---the ability to remove designated concepts from a pre-trained model---has advanced rapidly, particularly for text-to-image diffusion models. However, existing methods typically assume that unlearning requests arrive all at once, whereas in practice they often arrive sequentially. We present the first systematic study of \textbf{continual unlearning} in text-to-image diffusion models and show that popular unlearning methods suffer from \textbf{rapid utility collapse}: after only a few requests, models forget retained knowledge and generate degraded images. We trace this failure to cumulative parameter drift from the pre-training weights and argue that \textbf{regularization is crucial} to addressing it. To this end, we study a suite of add-on regularizers that (1) mitigate drift and (2) remain compatible with existing unlearning methods. Beyond generic regularizers, we show that \textbf{semantic awareness} is essential for preserving concepts close to the unlearning target, and propose a \textbf{gradient-projection method} that constrains parameter drift orthogonal to their subspace. This substantially improves continual unlearning performance and is \textbf{complementary} to other regularizers for further gains. Taken together, our study establishes continual unlearning as a fundamental challenge in text-to-image generation and provides insights, baselines, and open directions for advancing safe and accountable generative AI. 

\let\thefootnote\relax
\footnotetext{
$^{*}$ Equal Contribution. 
Project Page: {\fontsize{8}{9}\selectfont\url{https://justinhylee135.github.io/CUIG_Project_Page/}}
}

\end{abstract}

\vskip -10pt
\begin{figure}[h]
    \centering
    \includegraphics[width=1.0\linewidth]{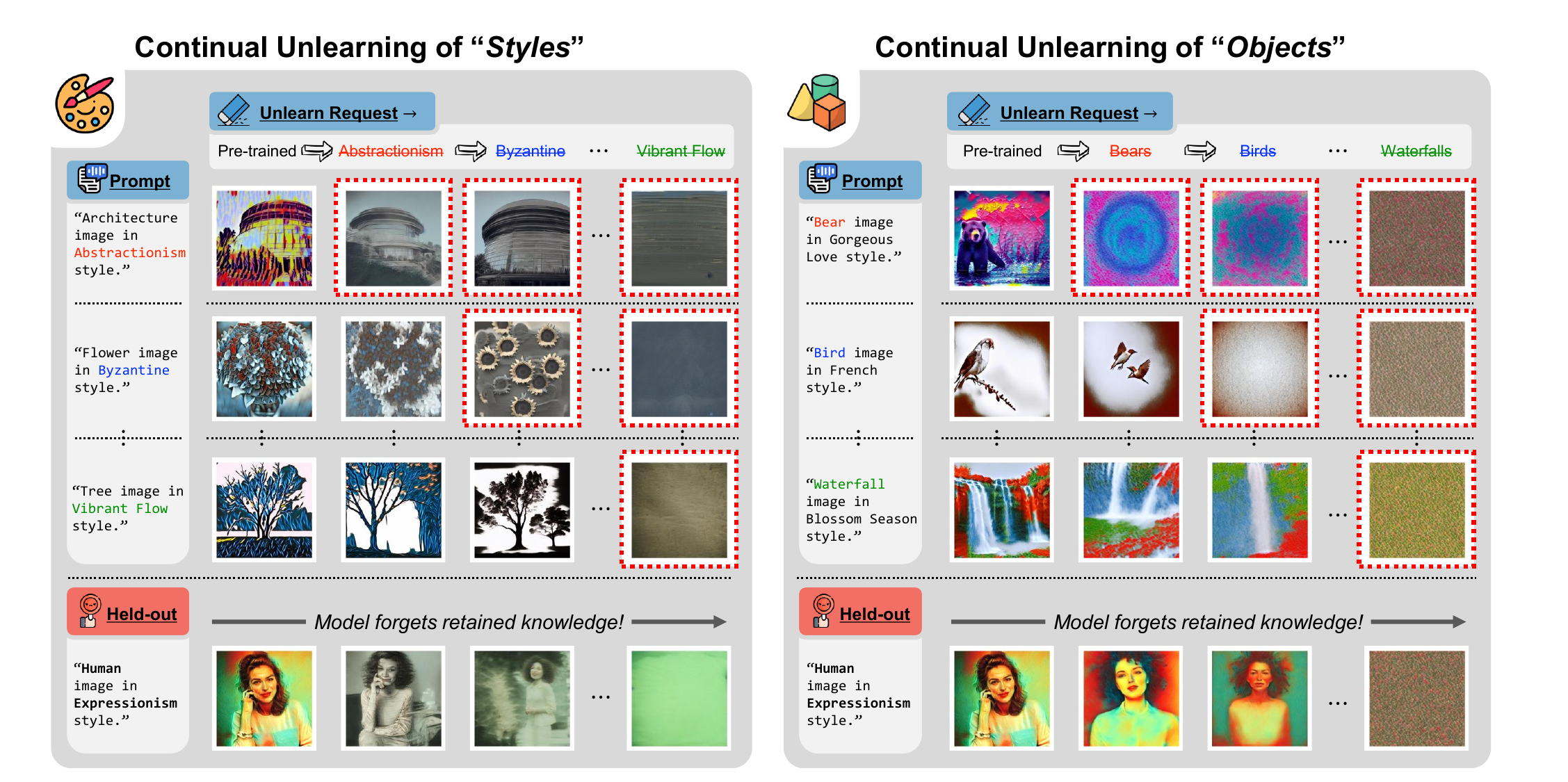}
    \vskip -5pt
    \caption{\small  \textbf{Continual unlearning leads to catastrophic degradation.} The  pre-trained model (first column) continually unlearns 12 concepts $\mathcal{C}^\star = \{c_1^\star, c_2^\star, \dots, c_{12}^\star\}$ (\eg, Abstractionism or Bears).  Different rows display various prompts used for image generation. In each row, \textcolor{red}{red boxes} highlight images where some specific concepts have been unlearned. Ideally, images without red boxes should remain conceptually intact. However, as illustrated in the bottom row, continual unlearning significantly impairs the model's ability to retain concepts. Notably, after unlearning 12 concepts (last column), the model fails to generate meaningful content. }    
    \label{fig: demo}
    \vskip -10pt
\end{figure}
\section{Introduction}
\label{s:intro}
Recent advances in text-to-image generation, driven primarily by diffusion models (DMs), have achieved unprecedented success in producing high-quality images across diverse concepts~\citep{rombach2022high, kawar2023imagic, zhang2024motiondiffuse, nichol2021glide}.
This versatility stems from training on massive, internet-scale datasets, but such broad data collection introduces serious ethical and legal risks: models may reproduce copyrighted material, generate harmful or biased content, and perpetuate stereotypes~\citep{schramowski2023safe, vinker2023concept}.
In response, regulations such as CCPA~\citep{ccpa} now grant individuals the right to request removal of their personal or copyrighted content.
However, retraining large DMs from scratch for every request is computationally infeasible---for example, retraining Stable Diffusion v2 on LAION-5B~\citep{schuhmann2022laion} requires roughly 150,000 GPU-hours~\citep{esd}.
As a result, \textit{machine unlearning} has emerged as a practical alternative, aiming to selectively erase undesired generative capabilities (\eg, a person's likeness or an artistic style) from pre-trained models without full retraining~\citep{hong2024all, esd, ca}.

Despite notable progress in unlearning for DMs, most methods assume that unlearning requests arrive simultaneously~\citep{doco, esd, ca, wu2024erasediff}. In reality, such requests are typically sequential---for example, a parent may request the removal of violent concepts one day, followed later by an artist seeking the exclusion of copyrighted artworks.

To reflect this real-world setting, we introduce \textbf{\textit{Continual Unlearning (CU)}} for text-to-image generation, defined as the sequential removal of targeted generative capabilities subject to three requirements: (i) effective erasure of newly targeted concepts, (ii) preservation of prior unlearning, and (iii) retention of all unrelated generative abilities (\autoref{fig: ideal}).
While CU has recently been studied in large language models (LLMs)~\citep{chen2023unlearn, jang2022knowledge}, it remains largely unexplored in image generation. We fill this gap with the \textbf{first comprehensive empirical study of CU for text-to-image diffusion models}, and introduce a benchmark that extends \textsc{UnlearnCanvas}~\citep{zhangunlearncanvas} with style- and object-level unlearning sequences (\autoref{fig: demo}). We outline major insights as follows.
\begin{itemize}[nosep, topsep=2pt, parsep=2pt, partopsep=2pt, leftmargin=*]
\item \textbf{Continual unlearning suffers rapid utility collapse.} Popular unlearning methods~\citep{ca}, while effective for removing one or a few concepts simultaneously, break down in the continual setting. After only a handful of requests, models forget retained knowledge and produce degraded images even for unrelated concepts. Our analysis attributes this failure to \textit{cumulative parameter drift}, as successive unlearning steps push the model farther from its pre-training manifold. Consistently, we observe much larger parameter shifts in continually unlearned models than in those where all target concepts are unlearned simultaneously or independently.

\item \textbf{Generic add-on regularizers partially alleviate collapse.} Motivated by the above, we explore regularizers that can be seamlessly integrated into existing unlearning methods to mitigate drift. These include (i) constraining the update norm relative to previously unlearned models, (ii) selectively updating parameters most critical for the target concepts, and (iii) merging independently unlearned models. These approaches reduce drift and improve preservation of concepts across domains (\eg, generating objects after unlearning styles).

\item \textbf{Semantic awareness is crucial for in-domain retention.} Retaining \textit{in-domain} capabilities (\eg, unlearning one style while preserving others) remains highly challenging, often leading to sharp utility drops even with regularizers. Empirically, we find a strong negative correlation between retention performance and the text-embedding similarity of the retention concept to the unlearning concept (see~\autoref{fig:ira_cosine}), underscoring the need for semantic awareness.

\item \textbf{Gradient projection provides a principled solution.} We propose a \textbf{gradient-projection method} that imposes a hard constraint on parameter updates, forcing them to be orthogonal to the subspace spanned by semantically close concepts. This minimizes unintended interference, substantially improves in-domain retention, and remains \emph{complementary} to other regularizers for further gains.
\end{itemize}
\mypar{Remark} Rather than proposing a new continual unlearning algorithm, we focus on developing compatible solutions that enhance existing methods---an approach we believe will have a broader impact. Interestingly, the regularizers we study are also effective for unlearning single concepts, the standard setting in the unlearning literature. Their benefits, however, are most pronounced in the continual scenario, particularly as the sequence length grows. Overall, our study provides robust reference points for advancing continual unlearning, underscoring its challenges, opportunities, and promising directions for future work.
\section{Related Work}

\begin{figure}
    \centering
 \includegraphics[width=0.95\linewidth]{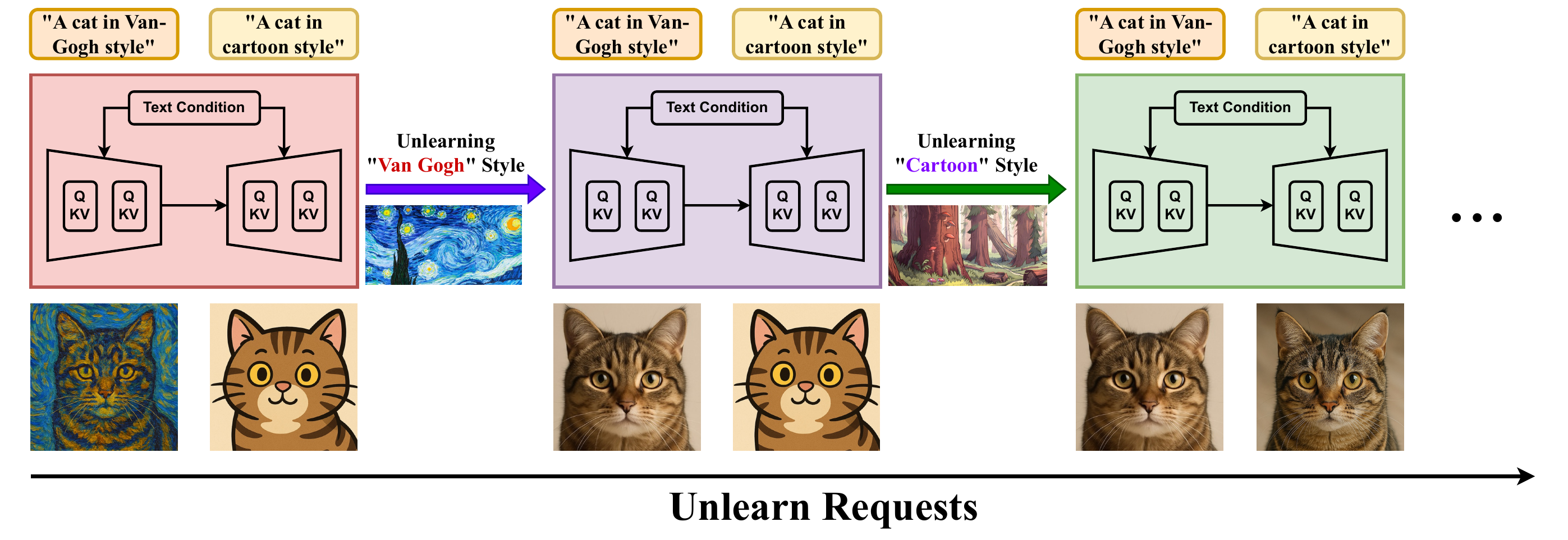}
 \vskip -10pt
    \caption{\small \textbf{The \emph{ideal outcomes} in continual unlearning.}  The pre-trained model continually unlearns two styles. Given the prompts to generate a cat in ``Van Gogh'' and ``Cartoon'' styles, the generated images should accurately reflect the styles. After the first unlearning step, the image for ``Cartoon'' should remain conceptually unchanged, while the image for ``Van Gogh'' should no longer exhibit the ``Van Gogh'' style. Following the second unlearning step, both the ``Van Gogh'' and ``Cartoon'' should be removed, while the concept ``cat'' should be retained.
    }
    \label{fig: ideal}
    \vskip -5pt
\end{figure}

\mypar{Machine Unlearning in Diffusion Models} Diffusion models revolutionized image generation by training on internet-scale data~\citep{schuhmann2022laion, wang2025doctor}. However, reliance on such data introduces risks of harmful outputs, copyright violations, and biases~\citep{schramowski2023safe, zhang2026agents}. Unlearning aims to remove undesirable generative capabilities without retraining from scratch~\citep{hong2024all, esd, wu2024erasediff}. Widely adopted methods like ConAbl~\citep{ca} map unlearning concepts to benign anchors, while recently proposed SculpMem~\citep{li2025sculptingmemorymulticonceptforgetting} improves ConAbl with a dynamic mask. Nevertheless, most methods still assume unlearning requests arrive simultaneously, overlooking realistic scenarios where requests arrive sequentially. Our study addresses the unexplored question:\textit{ Are unlearning methods still effective in continual settings, and how can they be adapted to unlearn continually?}

\mypar{Continual Unlearning}
Continual Unlearning (CU) is an emerging direction where removal requests arrive sequentially rather than all at once. CU was first studied in LLMs, aiming to unlearn user-sensitive knowledge or undesired capabilities while preserving general language ability~\citep{chen2023unlearn, gao2024large, jang2022knowledge}. In contrast, CU for image generation remains largely unexplored. We address this gap with a systematic study of CU for text-to-image diffusion models, diagnosing utility collapse and proposing mitigation strategies.


\mypar{Continual Learning} 
Continual unlearning and continual learning~\cite{mai2022online, Mai_2026} are closely related: both update existing models while striving to preserve acquired capabilities.  Unlike continual learning, where the model aims to learn new concepts, both the concepts to be removed and retained are \emph{already known} by the model in continual unlearning, amplifying interference risks. Despite this fundamental difference, principles from continual learning remain highly relevant \citep{heng2023selective}. Motivated by weight and gradient-based regularization and selective fine-tuning \citep{zenke2017continual,mazumder2021few,lopez2017gradient}, we investigate whether these mechanisms can be repurposed to enable effective unlearning without utility collapse. By bridging insights from continual learning to continual unlearning, we set the stage for future investigations.

\mypar{Detailed Related Work} Due to the page limit, we include detailed related work in \autoref{-sec: related}.  

\section{Preliminary}
\label{sec:preliminary}

\subsection{Machine Unlearning for Text-to-Image Diffusion Models}
\label{ss:preliminary-DM}
Diffusion models (DMs) generate images by progressively denoising an initial Gaussian sample. At each step $t$, a neural network $\epsilon_{\vtheta^\dagger}$ estimates the noise component in the current state $\vx_t$, producing a cleaner state $\vx_{t-1}$. Iterating this process yields $I = \vx_{0}$, the final image. For text-to-image generation, a text prompt $q$ is additionally input to $\epsilon_{\vtheta^\dagger}$ to guide the denoising trajectory, \ie, $\epsilon_{\vtheta^\dagger}(\vx_t, q, t)$. We denote the full generation process by $G_{\vtheta^\dagger}$, with output image $I = G_{\vtheta^\dagger}(q)$.  

Ideally, if a prompt $q$ contains a concept $c$ (\eg, an art style or object), the generated image $I = G_{\vtheta^\dagger}(q)$ should accurately reflect it. This can be evaluated with a recognition model $F$, such as CLIP~\citep{radford2021learning}, by checking whether the predicted label $\hat{c} = F(I)$ satisfies $\hat{c} = c$.  

Unlearning aims to update the pre-trained model weights $\vtheta^\dagger$ so as to remove the generative ability for designated target concepts. Let $c^\star$ denote a target concept, and let $\vtheta^\star$ denote the model parameters after unlearning $c^\star$. For any prompt $q$ containing $c^\star$, the generated image $I = G_{\vtheta^\star}(q)$ should satisfy $F(I) \ne c^\star$. For all other concepts $c \ne c^\star$, the model should retain them; that is, if $c$ appears in the prompt $q$, then we should have $F(G_{\vtheta^\star}(q)) = c$.  

\subsection{Paper Structure}  
The goal of this paper is to introduce, analyze, and improve continual unlearning (CU). We structure the remainder as follows:
\autoref{sec:benchmark} defines the CU setting and presents our benchmark;  
\autoref{sec:baseline_analysis} evaluates baseline CU approaches, identifies their limitations, and investigates the root cause of failure;  
\autoref{sec:regularizer} studies generic regularizers as a remedy, while \autoref{sec:semantic-aware} demonstrates the importance of semantic-aware regularizers for preserving in-domain generative capabilities.  Finally, \autoref{sec:understanding} provides further analysis of the unlearning dynamics, offering insights for future CU methods.

\section{Continual Unlearning: Setup and Benchmark}
\label{sec:benchmark}

\subsection{Setup}
\mypar{Motivation} In practice, a model may be asked to erase multiple concepts $\mathcal{C}^\star = \{c_1^\star, c_2^\star, \dots, c_N^\star\}$. If all requests arrive at once, one can update $\vtheta^\dagger$ to jointly unlearn all $c^\star \in \mathcal{C}^\star$. In reality, however, requests typically arrive sequentially, calling for \textit{continual unlearning} (CU) methods that remove each concept as it is received. 

\mypar{Definition} Without loss of generality, assume requests arrive in order $c_1^\star, \dots, c_N^\star$. Let $\vtheta_n^\star$ denote the model obtained after unlearning the first $n$ concepts. For any concept $c$ appearing in a prompt $q$, the model should satisfy:  
\[
F(G_{\vtheta_n^\star}(q)) =
\begin{cases}
\neq c, & \text{if } c \in \{c_1^\star, \dots, c_n^\star\}; \quad \text{(unlearned)} \\
= c, & \text{otherwise}. \hspace{48.5pt} \text{(retained)}
\end{cases}
\]
\mypar{Metrics} Following \textsc{UnlearnCanvas} we evaluate CU after the $n$-th request with two metrics:  
\begin{itemize}[nosep, topsep=2pt, parsep=2pt, partopsep=2pt, leftmargin=*]
\item \textbf{Unlearning Accuracy (UA).} For each unlearned concept $c \in \{c_1^\star, \dots, c_n^\star\}$, we count success when $F$ does \emph{not} return $c$ for an image generated from a prompt containing it, \ie, $F(G_{\vtheta_n^\star}(q)) \ne c$. UA is the fraction of successes across generated images.  
\item \textbf{Retention Accuracy (RA).} For each retained concept $c \notin \{c_1^\star, \dots, c_n^\star\}$, we count success when $F$ correctly returns $c$ for an image generated from a prompt containing it, \ie, $F(G_{\vtheta_n^\star}(q)) = c$.  
\end{itemize}

To better analyze retention, we partition concepts into two subsets: an \emph{in-domain} set, containing those semantically or structurally related to the unlearned concepts, and a \emph{cross-domain} set for the rest. For example, if the unlearning targets image styles (\eg, ``Cartoon''), then other styles (\eg, ``Van Gogh'') are in-domain, while objects (\eg, ``Cat'') are cross-domain. Accordingly, we report \textbf{In-Domain Retention Accuracy (RA-I)} and \textbf{Cross-Domain Retention Accuracy (RA-C)}.


\subsection{Benchmark}

\mypar{Data and Model Source} Prior works on concept unlearning have lacked standardized evaluation protocols, relying on heterogeneous metrics such as CLIP Score similarity~\citep{doco, ca, esd} or subjective human evaluation~\citep{esd}, thereby hindering fair comparison.  

To address this, we adopt \textsc{UnlearnCanvas}~\citep{zhangunlearncanvas} as our evaluation backbone. It provides a fine-tuned Stable Diffusion~\citep{rombach2022high} checkpoint $G_{\vtheta^\dagger}$ and specialized classifiers $F$ trained to recognize 60 artistic styles and 20 object categories. The DM checkpoint ensures that all 80 concepts can be generated with high accuracy (>98\% top-1), while the classifiers offer a standardized and objective means of reporting \textbf{UA}, \textbf{RA-I}, and \textbf{RA-C}.

\mypar{Evaluation Protocol}  
To systematically evaluate CU performance, we consider two settings for constructing the unlearning targets $\mathcal{C}^\star = \{c_1^\star, c_2^\star, \dots, c_N^\star\}$:
\begin{itemize}[nosep, topsep=2pt, parsep=2pt, partopsep=2pt, leftmargin=*]
\item \textbf{Continual Style Unlearning.} We sample a random sequence of 12 unique artistic styles to be unlearned. To evaluate retention, we hold out 12 additional styles and 8 objects that are never targeted during unlearning. This allows us to measure both in-domain retention (other styles) and cross-domain retention (objects).
\item \textbf{Continual Object Unlearning.} Symmetrically, we sample a random sequence of 12 unique objects to be unlearned. The same held-out evaluation set from the style setting is used, ensuring fair comparison across settings without biases from different evaluation sets.
\end{itemize}

After each unlearning request, we follow \textsc{UnlearnCanvas}~\citep{zhangunlearncanvas} to generate diverse images for both unlearned and retained concepts, using the template ``A \{object\} image in \{style\} style.'' For example, after erasing the ``Van Gogh'' style, we generate 5 images (different random seeds) for each of the 8 held-out objects conditioned on this style to compute UA. In total, this yields 40 images per style concept ($5 \times 8$). Analogously, when unlearning an object concept, we generate 5 images for each of the 12 held-out styles, yielding 60 images per object concept ($5 \times 12$).


\begin{figure}[t]
    \centering
    \includegraphics[width=.90\linewidth]{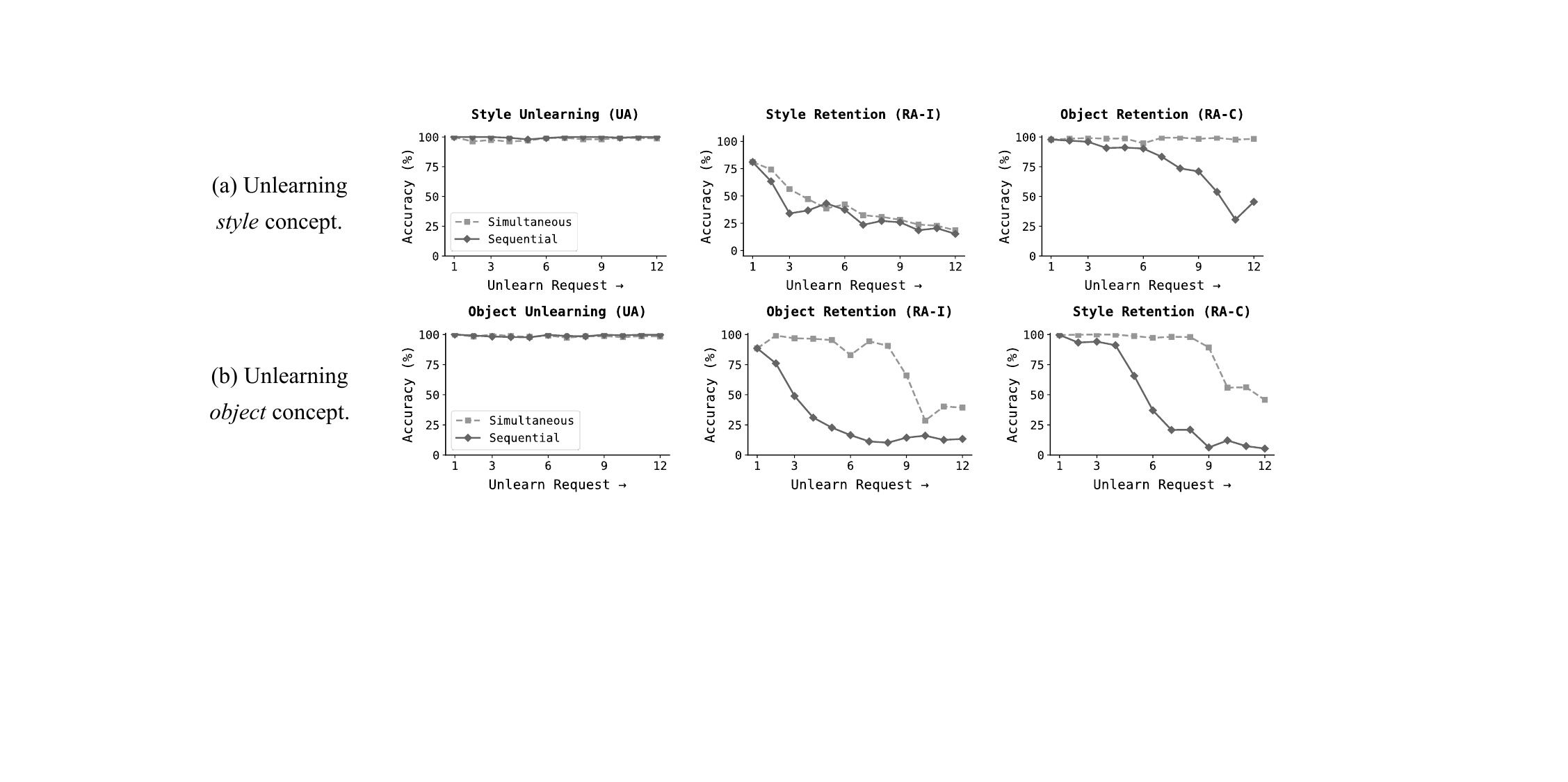}
    \vskip -8pt
    \caption{\small  ConAbl~\citep{ca} fails when unlearn requests arrive continually. Although it performs well at the initial request, unlearning sequentially leads to poor retention. Simultaneously unlearning all requests better preserves retention, but comes with a much higher cost. Plots for SculpMem~\citep{li2025sculptingmemorymulticonceptforgetting} in \autoref{-sec: results}.}
    \label{fig:unlearn-style-object}
    \vskip -10pt
\end{figure}
\section{Continual Unlearning Suffers Rapid Utility Collapse}
\label{sec:baseline_analysis}

\subsection{Existing Methods Fail To Unlearn Continually}
\mypara{Unlearning Methods.} We first examine how existing methods behave in a continual setting, focusing on two representative ones: the widely adopted Concept Ablation (ConAbl)~\citep{ca} and the recently proposed SculpMem~\citep{li2025sculptingmemorymulticonceptforgetting}. Like many unlearning methods, both define an unlearning loss $\mathcal{L}_{\text{unlearn}}(\vtheta, \mathcal{C})$ that depends on the model parameters $\vtheta$ and the target concept set $\mathcal{C}$. Minimizing this loss with initialization $\vtheta^\dagger$ (the pre-trained weights) yields an unlearned model $\vtheta^\star$.

\mypara{Extension to CU.} We adapt these methods to continual unlearning using two strategies:
\begin{itemize}[nosep, topsep=2pt, parsep=2pt, partopsep=2pt, leftmargin=*]
\item \textbf{Sequential:} At the $n$-th request, the model is {incrementally} updated by minimizing $\mathcal{L}_{\text{unlearn}}(\vtheta, \{c_n^\star\})$, starting from the previously unlearned model $\vtheta_{n-1}^\star$.
\item \textbf{Simultaneous:} At the $n$-th request, the model is retrained from the pre-trained weights $\vtheta^\dagger$ to jointly unlearn all target concepts so far, \ie, minimizing $\mathcal{L}_{\text{unlearn}}(\vtheta, \{c_1^\star, \dots, c_n^\star\})$.
\end{itemize}

\mypara{Results.} Both ConAbl and SculpMem perform well for single-concept unlearning, achieving high UA, RA-I, and RA-C on the first request (\autoref{fig:unlearn-style-object}). However, as additional concepts are unlearned sequentially, their utility collapses: while UA remains high, the models rapidly lose the ability to generate unrelated concepts, leading to drastic drops in RA-I and RA-C (\autoref{fig:unlearn-style-object}; \autoref{fig: demo}).  

By contrast, the simultaneous strategy preserves utility more effectively, but at a prohibitive cost: each new request requires \emph{re-unlearning} all prior concepts from scratch, making training time grow with the total number of requests (\autoref{-sec: simultaneous training costs}). This efficiency-utility trade-off underscores the need for continual unlearning methods that can handle sequential requests without collapsing retention.

\begin{figure}
    \centering
    \includegraphics[width=1.0\linewidth]{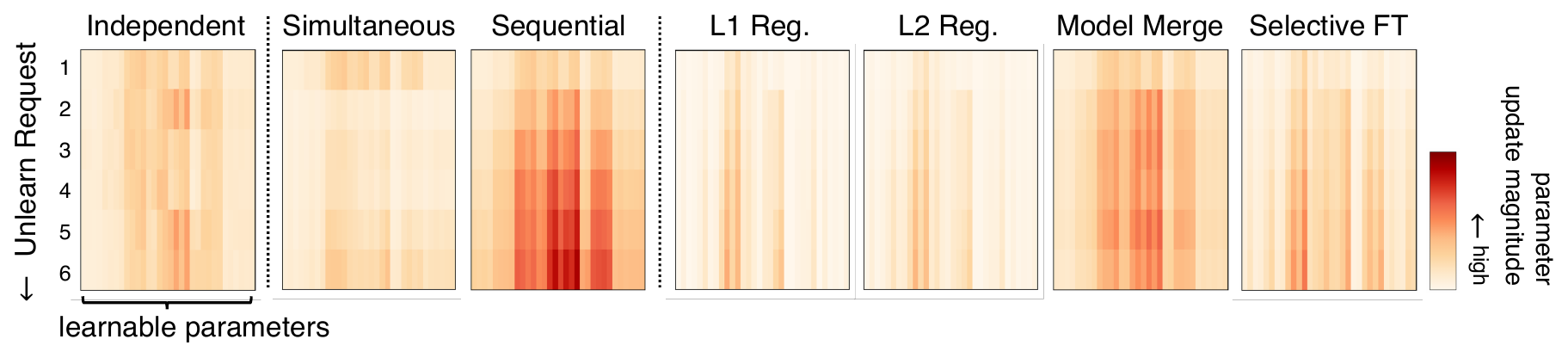}
    \vskip -10pt
    \caption{\small  ConAbl's~\citep{ca} cumulative  $\ell_2$ parameter drift w.r.t the pre-trained model. Sequential unlearning exhibits severe cumulative drift with more unlearned concepts compared to simultaneous unlearning. Our add-on regularizers effectively mitigate this drift and demonstrate better retention (\autoref{fig:unlearn-ca-addon}). }
    \label{fig: heatmap}
    \vskip -10pt
\end{figure}

\subsection{Why Does Sequential Unlearning Fail? }
\label{sec: parameter_drift}
\mypar{Empirical Observations} The above results raise an important question. Existing methods can unlearn multiple concepts with high retention when applied simultaneously, yet their effectiveness collapses when applied sequentially. To understand this discrepancy, we analyze the unlearned models $\vtheta_{n}^\star$ after the $n$-th request under both strategies, focusing on their deviation from the pre-trained weights $\vtheta^\dagger$.  
As shown in \autoref{fig: heatmap}, after the first request, both strategies exhibit a similar degree of parameter drift, measured by $\|\vtheta_{n}^\star - \vtheta^\dagger\|_{2}$. With more requests, however, drift grows dramatically under sequential unlearning while remaining nearly constant under the simultaneous strategy. For comparison, we also unlearn each concept \emph{independently} from $\vtheta^\dagger$. The norms of these parameter shifts remain similar to those from simultaneous unlearning, despite the latter involving progressively more concepts.

These findings suggest the following hypothesis: \emph{High retention in sequential continual unlearning requires regularizing parameter drift.}

\mypar{Theoretical Analysis}
We seek to provide a theoretical perspective on the empirical findings. 
Intuitively, the pre-trained weights $\vtheta^\dagger$ encode the model's original generative capabilities. Therefore, keeping the unlearned model $\vtheta^\star$ close to $\vtheta^\dagger$ should help preserve these capabilities. 

Building on the loss approximation framework from continual learning~\citep{yin2020optimization, zenke2017continual, aljundi2018memory}, we formalize this intuition using a Taylor expansion of the retention loss $L$ around $\vtheta^\dagger$ (full derivation in \autoref{-sec: theory}). This yields the following bound on the change in $L$:
\begin{align}
|L(\vtheta^{\star}, \mathcal{C}^r) - L(\vtheta^{\dagger}, \mathcal{C}^r)| 
&\le \|\nabla L(\vtheta^{\dagger}, \mathcal{C}^r)\| \cdot \|\vtheta^{\star} - \vtheta^{\dagger}\| 
+ \tfrac{1}{2}\|H(\vtheta^{\dagger}, \mathcal{C}^r)\| \cdot \|\vtheta^{\star} - \vtheta^{\dagger}\|^{2}, \nonumber
\end{align}
where $\vtheta^{\dagger}$ is the pre-trained model, $\vtheta^{\star}$ the unlearned model, $\mathcal{C}^r$ the retention set, and $H$ the Hessian.  

This inequality shows that the change in retention loss is Lipschitz continuous~w.r.t~the parameter update, meaning the loss grows proportionally (up to a constant) to $\|\vtheta^{\star} - \vtheta^{\dagger}\|$. Hence, preserving utility depends directly on how close the unlearned model remains to the pre-trained parameters.

Moreover, when the gradient and Hessian terms are small---typically the case near the optimum $\vtheta^{\dagger}$---the update norm becomes the dominant factor. To validate this, we estimate the curvature of the retention loss by perturbing the pre-trained weights and measuring the ratio of gradient change (evaluated on the \textsc{UnlearnCanvas} training set) to weight perturbation. The estimated Hessian coefficients are minuscule, confirming that the retention loss lies in a smooth basin (see \autoref{-sec: theory}).
\section{Add-On Regularization for Sequential Continual Unlearning}
\label{sec:regularizer}

Motivated by our empirical and theoretical analyses, we explore add-on regularization strategies that constrain parameter drift to improve retention. These approaches differ in how they measure drift (\eg, using different norms) and how they impose the constraint (overview in \autoref{fig:baselines}).

\subsection{Update Norm Regularization}
We begin with the most common approach: directly penalizing the norm of the parameter update. At the $n$-th request, we augment the unlearning loss with a regularization term:
\[
\mathcal{L}_{\text{unlearn}}(\vtheta, \{c_n^\star\}) 
+ \lambda \|\vtheta - \vtheta_{n-1}^\star\|_p^p,
\]
where $\vtheta_{n-1}^\star$ is the model obtained after the $(n-1)$-th request and serves as the initialization for $\vtheta$. We consider two choices of $p$: the $L_1$ norm, which encourages sparse updates, and the $L_2$ norm, which distributes the update across parameters, preventing any single weight from drifting excessively.

\begin{figure}[t]
\centering
\includegraphics[width=1.0\linewidth]{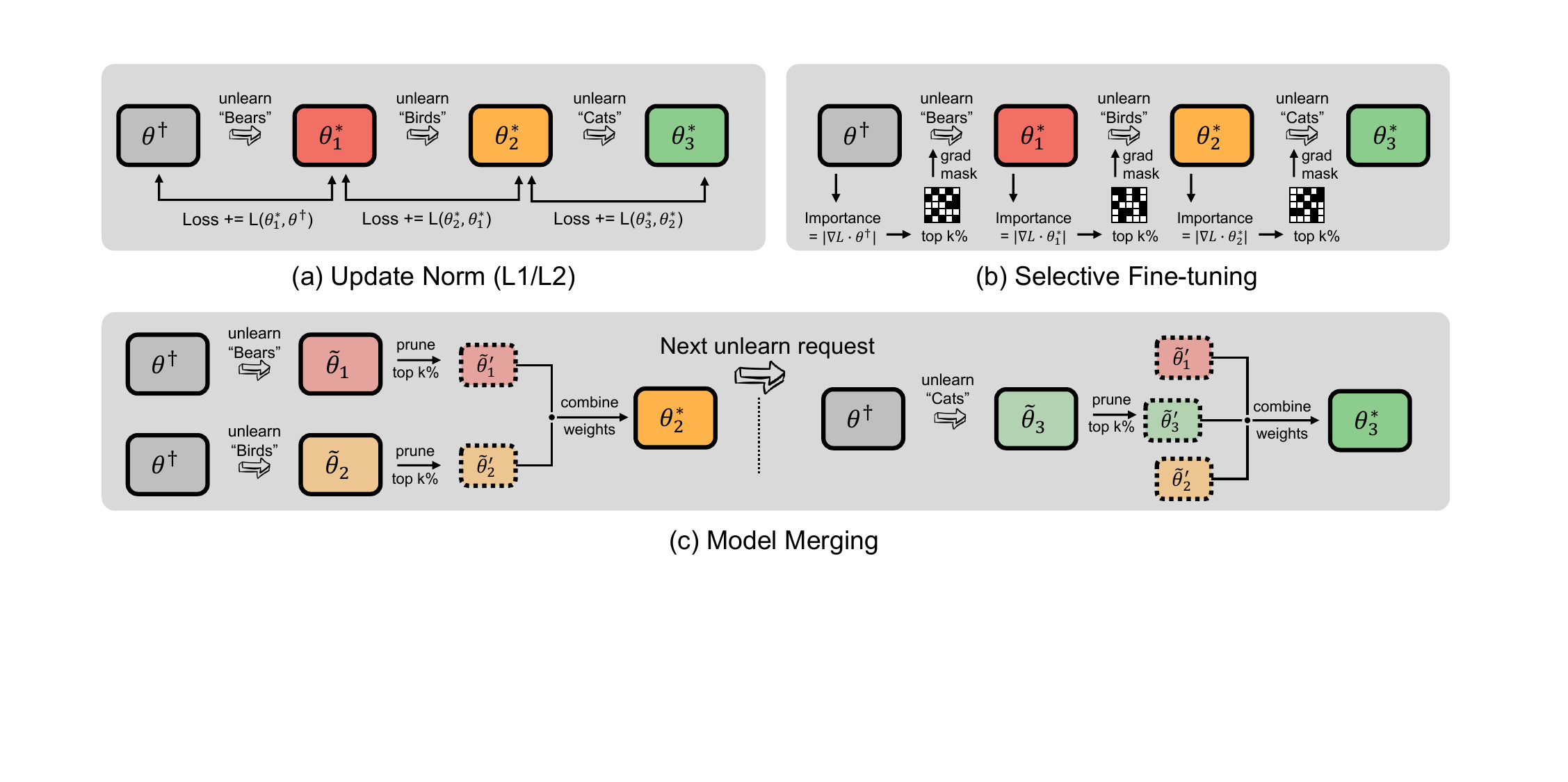}
\vskip-10pt
\caption{\small Overview of our add-on regularizers. (a)~L1/L2 penalizes the norm of the parameter update relative to the previous checkpoint. (b)~Selective Fine-tuning restricts updates to the top-$k$\% most important parameters. (c)~Model merging unlearns each concept independently and combines the resulting models.}
\vskip-10pt
\label{fig:baselines}
\end{figure}

\subsection{Selective Fine-tuning (SelFT)}
Inspired by work in continual learning~\citep{wang2024comprehensive}, unlearning~\citep{fan2023salun}, and model pruning~\citep{wang2020picking, cheng2024survey}, we investigate Selective Fine-tuning (SelFT) as an alternative to norm-based regularization. Unlike the $L_1$ penalty, which encourages isotropic sparsity, SelFT explicitly restricts updates to parameters deemed critical for the unlearning loss. 

While SelFT is often used as an umbrella term for gradient-masking or saliency-based approaches, in this paper, we adopt the method by~\cite{nguyen2024unveiling}. At the $n$-th request, given $\vtheta_{n-1}^\star$, we compute parameter importance in a single forward pass using a first-order Taylor approximation:
\[
\text{Importance}(d) = \big|\nabla_{\theta[d]} \mathcal{L}_{\text{unlearn}}(\vtheta_{n-1}^\star, \{c_n^\star\}) \cdot \theta_{n-1}^\star[d] \big|.
\]
We then select the top $k\%$ most important parameters and update only those during unlearning. By explicitly limiting the number of tunable parameters, SelFT constrains drift while still allowing effective concept removal.

\subsection{Model Merge}
Since independently unlearned models for each concept remain close to the pre-trained weights $\vtheta^\dagger$ (\autoref{fig: heatmap}), we investigate model merging~\citep{yang2024model} to integrate their effects while staying near the original model. As all such models originate from the same checkpoint, they are likely to lie in the same loss basin~\citep{frankle2020linear}. Interpolating within this basin keeps retention loss low, allowing merged models to preserve utility while handling multiple unlearning requests.  

Concretely, let $\tilde{\vtheta}_n$ denote the $n$-th independently unlearned model. We adopt TIES-Merging~\citep{yadav2023ties} to construct $\vtheta_n^\star$ by merging $\tilde{\vtheta}_1, \dots, \tilde{\vtheta}_n$. TIES first \emph{prunes} each $\tilde{\vtheta}_n$ by retaining the top-$k\%$ parameter updates (ranked by absolute deviation from $\vtheta^\dagger$), yielding a pruned model $\tilde{\vtheta}_n'$, and then \emph{merges} them by averaging. Importantly, merging imposes a form of regularization: the merged model $\vtheta_n^\star$ lies in the affine hull of the pruned models, which restricts updates to the subspace spanned by $\{(\tilde{\vtheta}_1'-\vtheta^\dagger), \dots, (\tilde{\vtheta}_n'-\vtheta^\dagger)\}$.

\begin{figure}[t]
    \centering
    \includegraphics[width=1.0\linewidth]{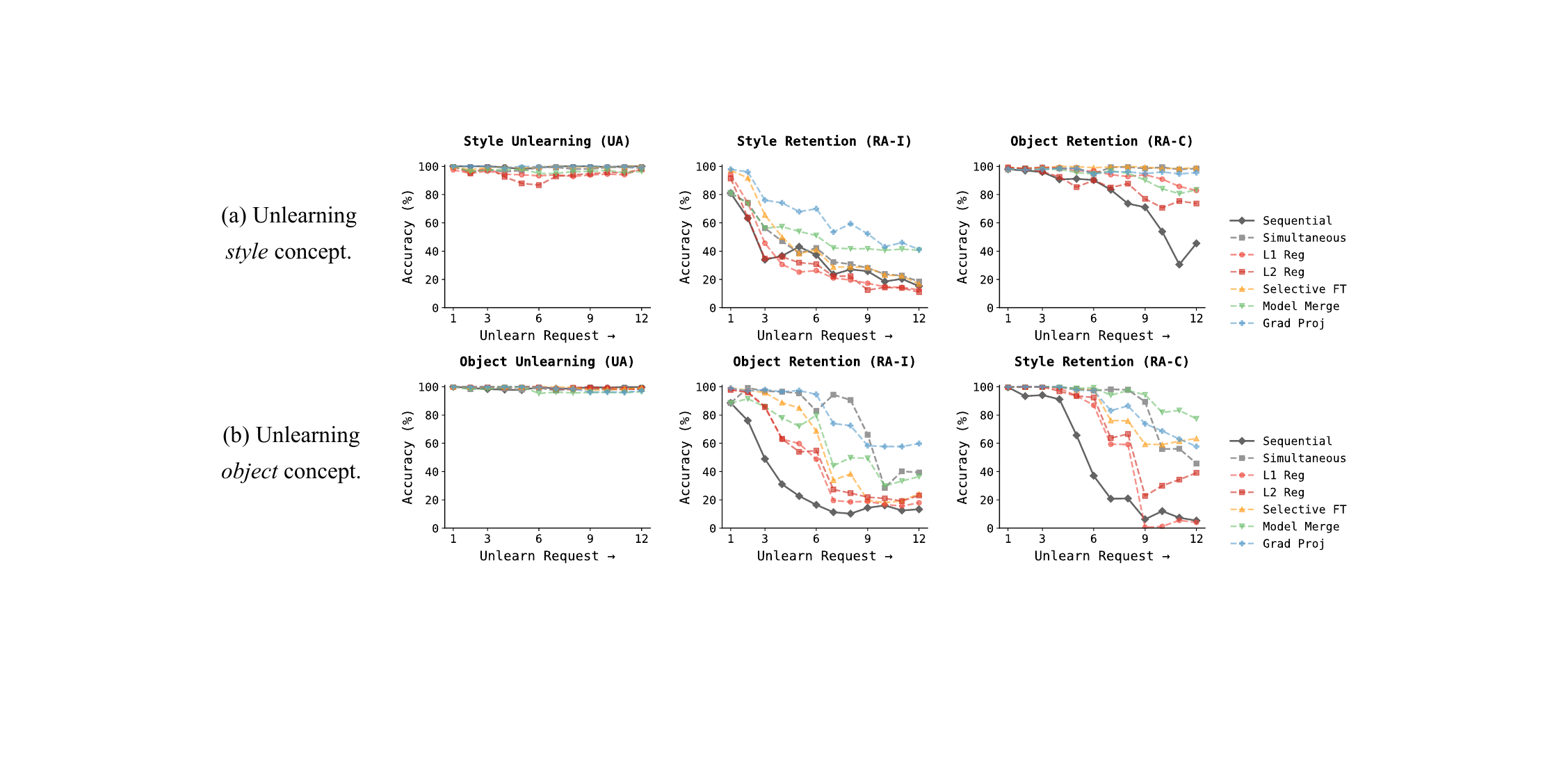}
    \vskip -8pt
    \caption{\small Add-on regularizers substantially improve retention under sequential unlearning with ConAbl, yielding particularly strong gains in cross-domain retention (RA-C). By removing gradient components that interfere with semantically related concepts, our gradient projection method further improves in-domain retention (RA-I). Results for SculpMem are provided in \autoref{-sec: results}.}
    
    \label{fig:unlearn-ca-addon}
    \vskip -10pt
\end{figure}

\subsection{Results and Insights}
As shown in \autoref{fig: heatmap}, add-on regularizers substantially reduce parameter drift during sequential unlearning, yielding clear improvements in both RA-I and RA-C (\autoref{fig:unlearn-ca-addon}). Among them, model merging delivers the strongest overall retention.

However, in-domain retention (RA-I) remains particularly challenging: across all regularizers, RA-C consistently surpasses RA-I. This gap is expected, as concepts within the same domain are semantically closer (\eg, ``Bear'' vs. ``Cat'') than cross-domain pairs (\eg, ``Bear'' vs. ``Van Gogh''), making them more prone to interference during unlearning.

\section{Gradient Projection for Semantic-Aware Continual Unlearning}
\label{sec:semantic-aware}

\begin{figure}[t]
  \centering
  \begin{subfigure}[t]{0.32\textwidth}
    \centering
    \includegraphics[width=\linewidth]{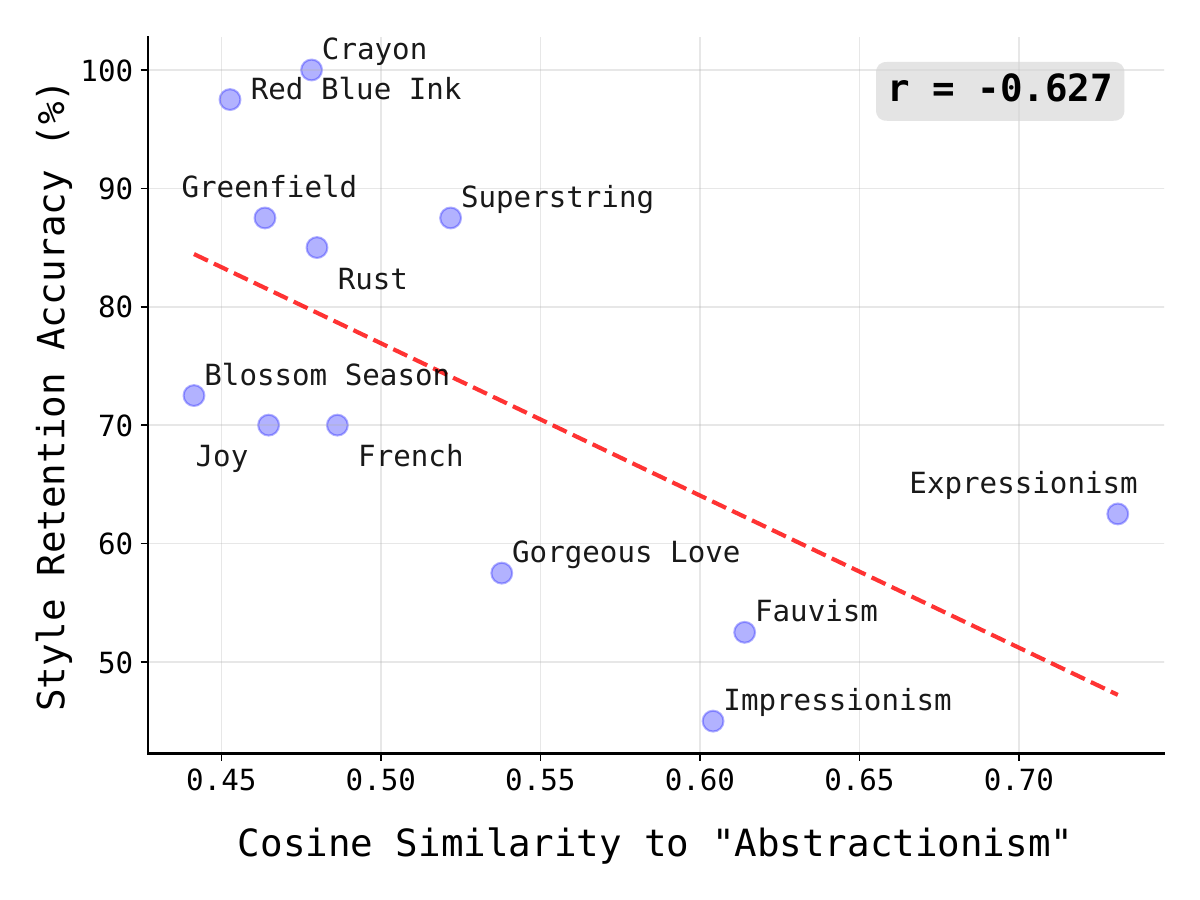}
    \subcaption{\small RA-I. \textit{vs.} Cos. Sim.}
    \label{fig:ira_cosine}
  \end{subfigure}\hfill
  \begin{subfigure}[t]{0.32\textwidth}
    \centering
    \includegraphics[width=\linewidth]{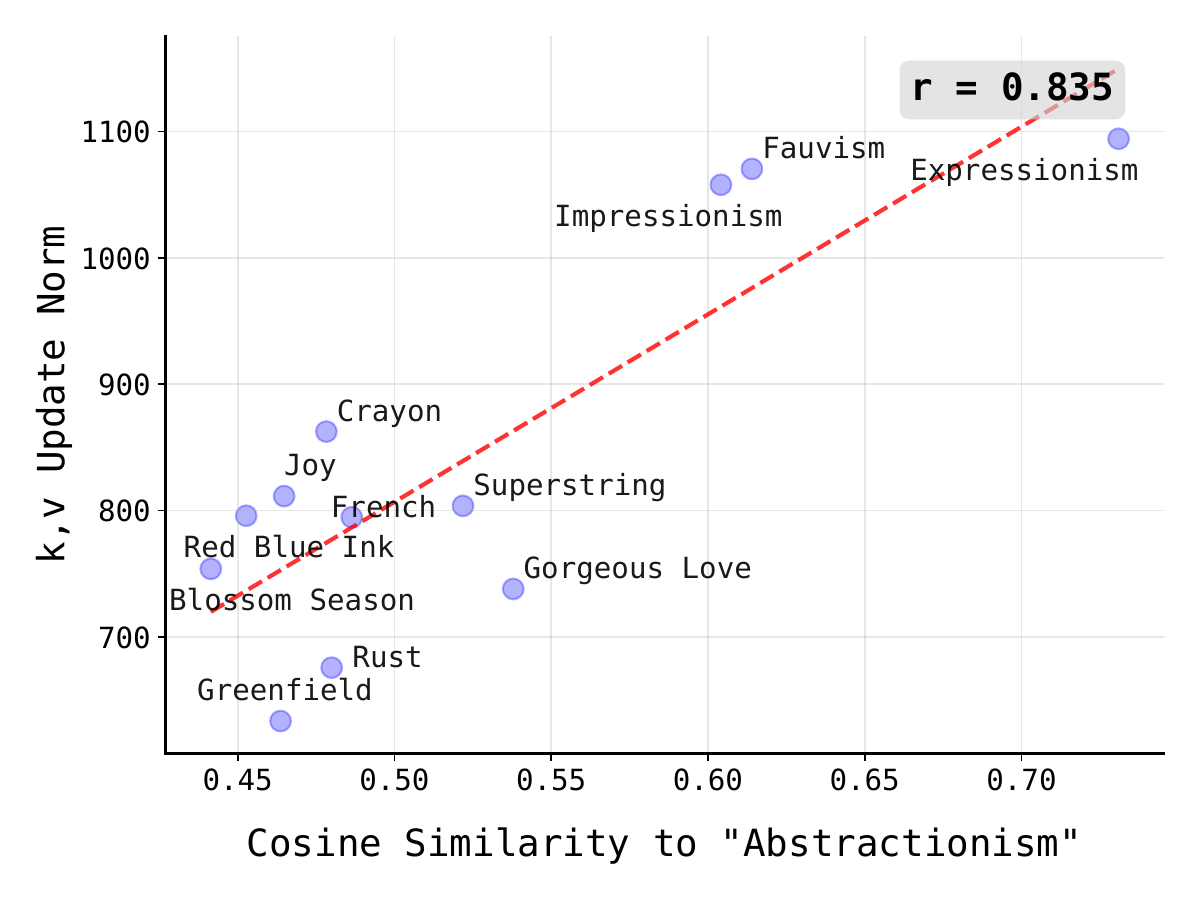}
    \subcaption{\small $\vk$, $\vv$ Update. \textit{vs.} Cos. Sim.}
    \label{fig:cosine_kv}
  \end{subfigure}\hfill
  \begin{subfigure}[t]{0.32\textwidth}
    \centering
    \includegraphics[width=\linewidth]{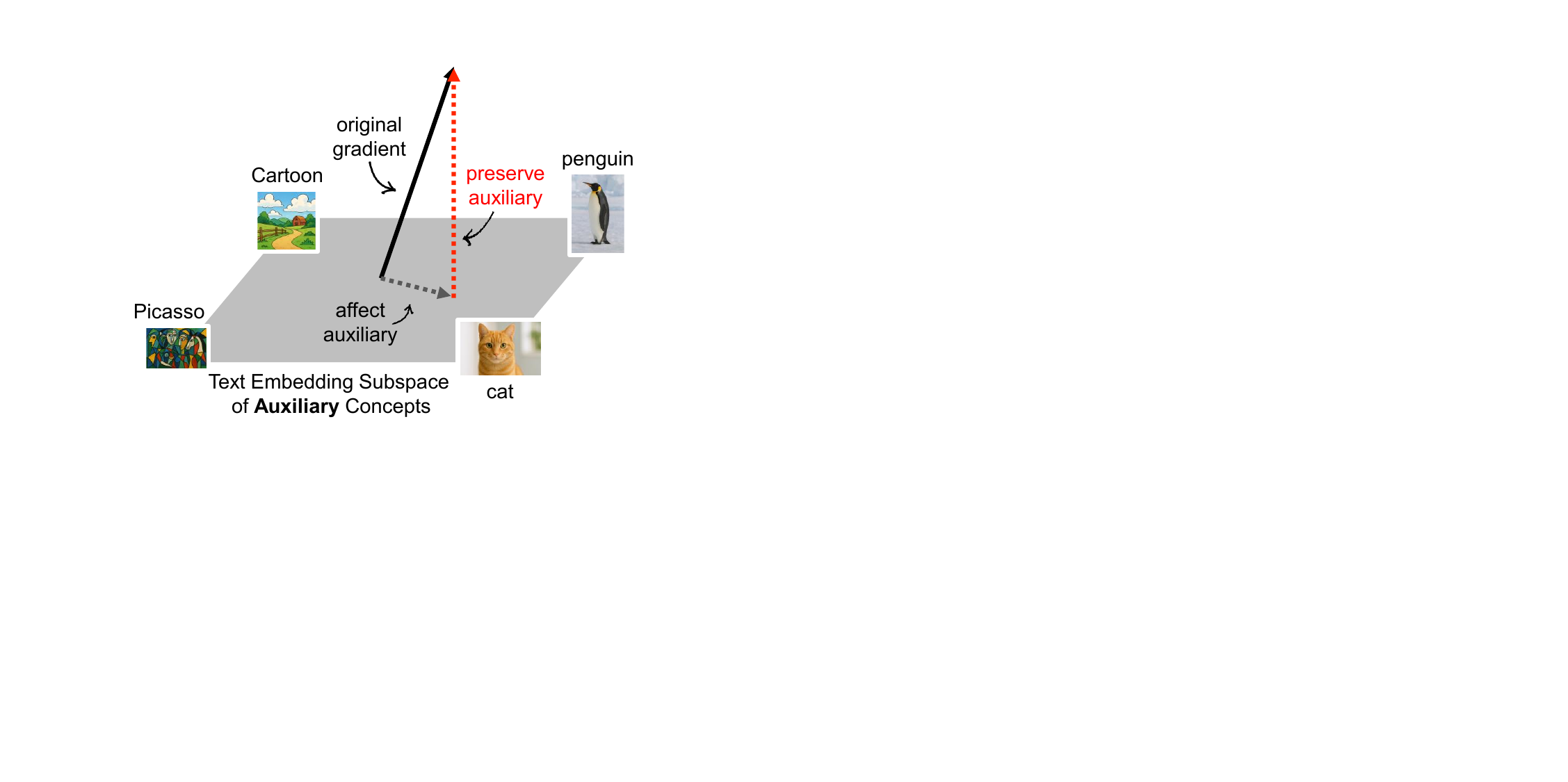}
    \subcaption{\small Gradient Projection schematic}
    \label{fig:gradient_projection}
  \end{subfigure}
  \vskip-5pt
  \caption{\small (a) RA-I decreases as text-embedding cosine similarity to the unlearned concept (\eg, "Abstractionism" style) increases (strong negative correlation). (b) Change in $\vk$,$\vv$ grows with text-embedding cosine similarity (strong positive correlation). (c) These trends motivate our gradient-projection method, which removes gradient components that alter $\vk$, $\vv$ for semantically similar (auxiliary) concepts.}
  \label{fig:triple}
  \vskip -15pt
\end{figure}

\subsection{Semantic Awareness is Crucial}
To systematically quantify the link between semantic similarity and retention difficulty, we unlearn the style concept ``Abstractionism'' and measure the retention accuracy of other style concepts, alongside their cosine similarity to the text embedding of ``Abstractionism.''
As shown in \autoref{fig:ira_cosine}, retention accuracy exhibits a strong negative correlation with embedding similarity to the unlearned concept, underscoring the need for semantic awareness in continual unlearning. In particular, special care is required to preserve generative capabilities that are semantically close to the unlearning target.

\subsection{Projection Matrices in Cross-Attention as the Key Source of Interference}
To design a semantic-aware strategy for unlearning in diffusion models, we first need to understand how semantically close concepts interfere with each other in the model. The standard way to inject a prompt $q$ into the denoising network $\epsilon_{\vtheta^\dagger}(\vx_t, q, t)$ (cf. \autoref{ss:preliminary-DM}) is through cross-attention, where the current latent state $\vx_t$ serves as queries to attend to the token embeddings of the text prompt $q$. In this mechanism, the prompt tokens are projected into keys and values that $\vx_t$ retrieves from. 

For simplicity, assume each concept $c$ corresponds to a single token, with embedding $E(c)$. Its key and value vectors are then given by
\[
\vk = \mW_K E(c), \quad \vv = \mW_V E(c),
\]
where $\mW_K$ and $\mW_V$ are learnable projection matrices. Unlearning a target concept $c^\star$ amounts to updating $\mW_K$ and $\mW_V$ so that $E(c^\star)$ is mapped far away from its original $(\vk, \vv)$, making it inaccessible for $\vx_t$ to retrieve during generation. 

However, because linear projections approximately preserve neighborhood structure, semantically similar concepts $c$ and $c^\star$ remain close after projection:
\[
\|\mA E(c) - \mA E(c^\star)\| \;\leq\; \|\mA\| \cdot \|E(c) - E(c^\star)\|,
\]
for any linear operator $\mA$. Consequently, updating $\mW_K$ and $\mW_V$ to suppress $c^\star$ inevitably distorts the embeddings of nearby concepts $c$ as well. Empirically, we observe that higher text-embedding similarity indeed correlates with greater distortion in $(\vk, \vv)$ (\autoref{fig:cosine_kv}).

\subsection{Gradient Projection to Suppress Interference}
We propose a method to suppress the undesired influence of updating $\mW_K$ and $\mW_V$ on semantically similar concepts. After obtaining the \emph{unlearning gradients} with respect to the projection matrices,
\[
\nabla_{\mW_K}\mathcal{L}_{\text{unlearn}}(\vtheta,\{c^\star\}), \quad 
\nabla_{\mW_V}\mathcal{L}_{\text{unlearn}}(\vtheta,\{c^\star\}),
\]
we project out the components that perturb nearby concepts to first order (\autoref{fig:gradient_projection}).

Concretely, let $\{c_i\}_{i=1}^M$ denote $M$ auxiliary\footnote{We define \emph{auxiliary} concepts as those that are semantically related to the target concept but should be retained during unlearning. Our method does not require access to the retain set.} concepts generated by an LLM and filtered by text-embedding similarity to the target $c^\star$, and let $C = [E(c_1), E(c_2), \ldots, E(c_M)]$
be their embeddings. The span $\mathcal{S} := \operatorname{span}(C)$ approximates the subspace of embedding directions corresponding to semantically similar concepts. We remove gradient components lying in $\mathcal{S}$, ensuring that updates suppress $c^\star$ while minimally distorting its neighbors.

\mypar{Details} 
Given a vector $g$, its Euclidean projection onto $\mathcal{S}$ is
$\operatorname{proj}_{\mathcal S}(g) := \arg\min_{u \in \mathcal S} \|g - u\|_2^2.$
Since any $u \in \mathcal S$ can be expressed as $u = C\alpha$ for some $\alpha \in \mathbb{R}^M$, this reduces to a least-squares problem with a closed-form solution $\hat{u} = C\hat{\alpha} = P_{\mathcal S} g$, where $P_{\mathcal S} := C(C^\top C)^{-1}C^\top.$
Here, $P_{\mathcal S}$ is the orthogonal projector onto the subspace spanned by $C$, and the complementary projector $P_{\mathcal S^\perp} := I - P_{\mathcal S}$
removes all components of $g$ that lie in $\mathcal{S}$.

Let $g^\star$ denote the gradient for unlearning the target concept $c^\star$.  
To suppress interference with semantically similar concepts, we project $g^\star$ onto the orthogonal complement of $\operatorname{span}(C)$ (\autoref{fig:gradient_projection}):
\[
g' \;=\; P_{\mathcal S^\perp} g^\star  \;=\;  (I - P_{\mathcal S}) g^\star.
\]
Here, $g'$ is the projected gradient, which preserves directions useful for unlearning $c^\star$ while eliminating components aligned with nearby concepts. We note that $g' = (I - P_{\mathcal S}) g'$.

\begin{lemma}[First-order invariance]
\label{lem:invariance}
For any $c\in\mathrm{span}(C)$, the update direction $g'$ produces zero first-order change:
\(
{g'}^\top c \,=\, 0.
\)
\end{lemma}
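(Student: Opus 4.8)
The plan is to observe that Lemma~\ref{lem:invariance} is an almost immediate consequence of the defining property of the orthogonal projector $P_{\mathcal S^\perp} = I - P_{\mathcal S}$, together with the identity $g' = P_{\mathcal S^\perp} g^\star$. First I would recall that $P_{\mathcal S} = C(C^\top C)^{-1} C^\top$ is the orthogonal projector onto $\mathcal S = \operatorname{span}(C)$, so $P_{\mathcal S}$ is symmetric ($P_{\mathcal S}^\top = P_{\mathcal S}$) and idempotent, and hence $P_{\mathcal S^\perp} = I - P_{\mathcal S}$ is also symmetric. The key structural fact is that the range of $P_{\mathcal S^\perp}$ is exactly the orthogonal complement $\mathcal S^\perp$: for any vector $w$, $P_{\mathcal S^\perp} w \in \mathcal S^\perp$, meaning it is orthogonal to every element of $\mathcal S$.

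The proof itself is then two lines. Take any $c \in \operatorname{span}(C) = \mathcal S$. Then
\[
{g'}^\top c \;=\; (P_{\mathcal S^\perp} g^\star)^\top c \;=\; {g^\star}^\top P_{\mathcal S^\perp}^\top c \;=\; {g^\star}^\top P_{\mathcal S^\perp} c,
\]
using the symmetry of $P_{\mathcal S^\perp}$. Since $c \in \mathcal S$, we have $P_{\mathcal S} c = c$, hence $P_{\mathcal S^\perp} c = (I - P_{\mathcal S}) c = c - c = 0$, and therefore ${g'}^\top c = 0$. Equivalently, one can phrase it the other way: $g' = P_{\mathcal S^\perp} g^\star$ lies in $\mathcal S^\perp$ by construction, so it is orthogonal to every $c \in \mathcal S$ by the definition of the orthogonal complement. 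Either route works; I would probably state the one-line version ($g' \in \mathcal S^\perp$ by construction, $c \in \mathcal S$, done) and then optionally expand the explicit computation for readers who want it.

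I do not anticipate any real obstacle here — the statement is essentially a restatement of what it means to project onto an orthogonal complement. The only things worth being careful about are: (i) noting that $C^\top C$ is invertible, which requires the embeddings $E(c_1), \dots, E(c_M)$ to be linearly independent (or, if not, replacing $(C^\top C)^{-1}$ by a pseudoinverse, in which case $P_{\mathcal S}$ is still the well-defined orthogonal projector onto $\operatorname{span}(C)$ and the argument is unchanged); and (ii) making explicit that the "first-order change" in the statement refers to the directional derivative of the relevant quantity along $g'$, so that ${g'}^\top c = 0$ is precisely the assertion that moving the parameters in direction $g'$ induces no first-order change in the key/value vector associated with embedding $c$. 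I would add one sentence connecting back to the interference discussion: since every semantically similar auxiliary concept embedding lies in (or near) $\mathcal S$ by construction, the update $g'$ leaves their projected representations $\mathbf{k}, \mathbf{v}$ unchanged to first order, which is exactly the desired suppression of interference.
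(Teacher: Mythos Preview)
Your proof is correct and essentially identical to the paper's: both move the projector onto $c$ via symmetry and use $P_{\mathcal S^\perp} c = (I - P_{\mathcal S}) c = 0$ for $c \in \mathcal S$. The only cosmetic difference is that the paper starts from the idempotence identity $g' = (I - P_{\mathcal S}) g'$ rather than the definition $g' = P_{\mathcal S^\perp} g^\star$, but the resulting one-line computation is the same.
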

\begin{proof}
If $c\in\mathrm{span}(C)$, then $P_{\mathcal S} c = c$.
Hence
\(
{g'}^\top c = {g'}^\top (I - P_{\mathcal S})^\top c = {g'}^\top(c-c)=0.
\)
\end{proof}


\begin{wrapfigure}{R}{.45\textwidth}
\vskip -20pt
\centering
\minipage{1\linewidth}
\includegraphics[width=\linewidth]{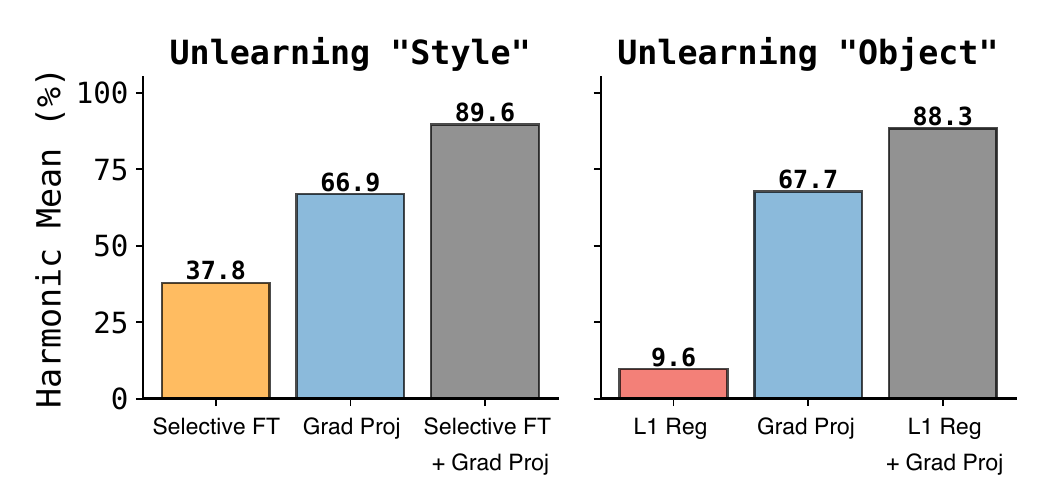}
\endminipage
\vskip-10pt
\caption{\small Gradient-projection is compatible with add-on regularizers, yielding additional gains when combining them. The y-axis shows the harmonic mean of UA, RA-I, and RA-C, using ConAbl for unlearning.}
\vskip-5pt
\label{fig:complementary-ca}
\end{wrapfigure}

\mypar{Results}
As shown in \autoref{fig:unlearn-ca-addon}, our gradient-projection method achieves the highest retention accuracy on in-domain concepts (RA-I) across both style and object unlearning settings. However, its retention accuracy on cross-domain concepts (RA-C) is slightly lower than that of SelFT and model merging in the object unlearning case. To address this gap, we examine whether combining our method with existing add-on regularizers can improve RA-C while preserving RA-I. As shown in \autoref{fig:complementary-ca}, gradient projection is indeed \textbf{compatible} with these add-ons, and their combination yields further improvements.

\newpage
\section{\mbox{Understanding the Unlearning Process}}
\label{sec:understanding}

\begin{wrapfigure}{R}{.50\textwidth}
  \vspace{-12pt}
  \centering
  \includegraphics[width=\linewidth]{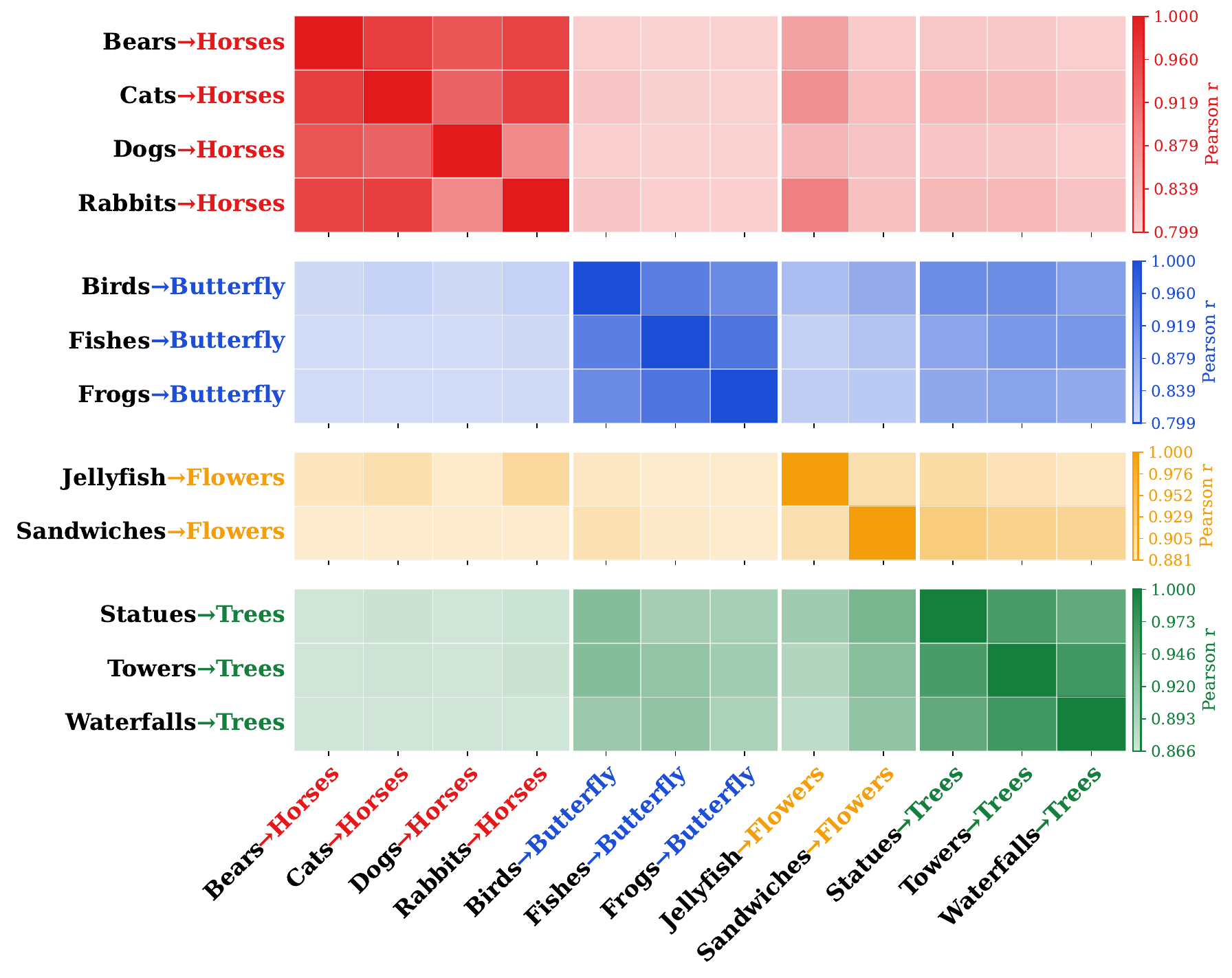}
  \vspace{-12pt}
    \caption{\small Parameter updates during unlearning are governed by the anchor concept rather than the specific target being removed. Models that share an anchor display highly correlated updates.}
  \vspace{-14pt}
  \label{fig: anchor analysis}
\end{wrapfigure}

To complement our previous analysis, we investigate the mechanics of how unlearning modifies the model, revealing insights that may inform the design of future unlearning and regularization methods. 

\mypar{Unlearning is About Learning}
For anchor-based unlearning methods~\citep{ca}, we find that the parameter updates are driven primarily by the chosen anchor rather than the target being unlearned. Distinct targets mapped to the same anchor induce highly correlated parameter updates (\autoref{fig: anchor analysis}). This suggests that anchor-based unlearning functions as representation replacement: the anchor representation is relearned to overwrite the target. This exposes a tradeoff: mapping many targets to a shared anchor may localize parameter modifications, while mapping targets to distinct anchors may disperse updates more broadly across the model.

\mypar{Concept Erasure is All-or-Nothing}
When interpolating between the pre-trained and unlearned model, we find that for many concepts, generated outputs remain visually unchanged across a wide range of interpolation coefficients until a critical threshold is crossed, at which point the target concept is abruptly suppressed (\autoref{-sec: ext all or nothing}). Erasure thus behaves as a sharp transition rather than a gradual process. This complicates model merging, as interpolation coefficients must be carefully chosen to keep every concept above its respective erasure threshold.

\mypar{Parameter Drift is Intrinsic to Sequential Unlearning}
One might hypothesize that the greater parameter drift in sequential unlearning (Section~\ref{sec: parameter_drift}) is simply an artifact of using more optimization steps. To test this, we apply early stopping to both sequential and simultaneous unlearning, terminating each when 99\% unlearning accuracy is reached. We then continue unlearning additional concepts until the cumulative optimization steps of simultaneous unlearning match or exceed those of sequential unlearning. Even with comparable totals (\eg, 2{,}100 each after six concepts), sequential unlearning still accumulates substantially more drift (\autoref{-sec: ext drift intrinsic}). Understanding what property of simultaneous unlearning keeps drift low despite comparable optimization steps may guide the design of regularizers that bring the same benefit to sequential methods.

\section{Conclusion}
We present the first systematic study of continual unlearning for image generation, reflecting real-world scenarios where unlearning requests arrive sequentially. We find that existing methods quickly degrade in utility—forgetting retained concepts and generating low-quality images—and trace this failure to cumulative parameter drift and semantic interference. We show that simple, plug-and-play regularizers based on update norm, selective fine-tuning, model merging, and semantic-aware gradient-projection can substantially restore performance. \textbf{For practitioners}, our results suggest that combining selective fine-tuning with gradient-projection provides a strong starting point, effectively constraining both parameter drift and semantic interference to deliver robust retention across in-domain and cross-domain settings.

\mypar{Future work} Understanding how robustness to adversarial recovery attacks evolves across sequential unlearning steps is critical for safe deployment, particularly as it remains unclear whether these challenges compound differently across architectures (\eg, DiT), training objectives (\eg, flow matching), and modalities (\eg, video, speech) beyond diffusion-based image generation. While our plug-and-play regularizers provide a strong foundation, designing natively sequential unlearning methods that anticipate future requests and account for their interactions is a natural next step toward further advancing continual unlearning.
\newpage
\clearpage

\section*{Acknowledgments}
This research is supported by grants from the National Science Foundation (ICICLE: OAC-2112606). We are grateful for the support of the Ohio Supercomputer Center for providing computational resources.
{
    \small
    \bibliographystyle{iclr2026_conference}
    \bibliography{main}
}
\newpage
\clearpage
\appendix

\section*{Appendix}
\addcontentsline{toc}{section}{Appendix}
\mypar{Disclosure of LLM Usage}
Portions of this manuscript were polished for clarity and readability using an LLM. The LLM was not used to generate research ideas, design experiments, analyze data, or draw conclusions. All scientific content, methods, and results are the authors’ original work.

\mypar{Appendix Structure}
This appendix is organized as follows. \autoref{-sec: results} provides extended experimental results showing generalized findings across different unlearning methods and erasure settings. \autoref{-sec: theory} includes extended theoretical support and additional empirical evidence clarifying the importance of constraining parameter drift. Additional analysis regarding the unlearning process can be found \autoref{-sec: ext all or nothing}, \autoref{-sec: ext drift intrinsic}, and \autoref{-sec: simultaneous training costs}. Finally, detailed related work can be found in \autoref{-sec: related}.

\section{Extended Experimental Results}
\label{-sec: results}

\subsection{Additional Results on Sculpting Memory} 
\begin{figure}[b]
    \centering
    \includegraphics[width=.9\linewidth]{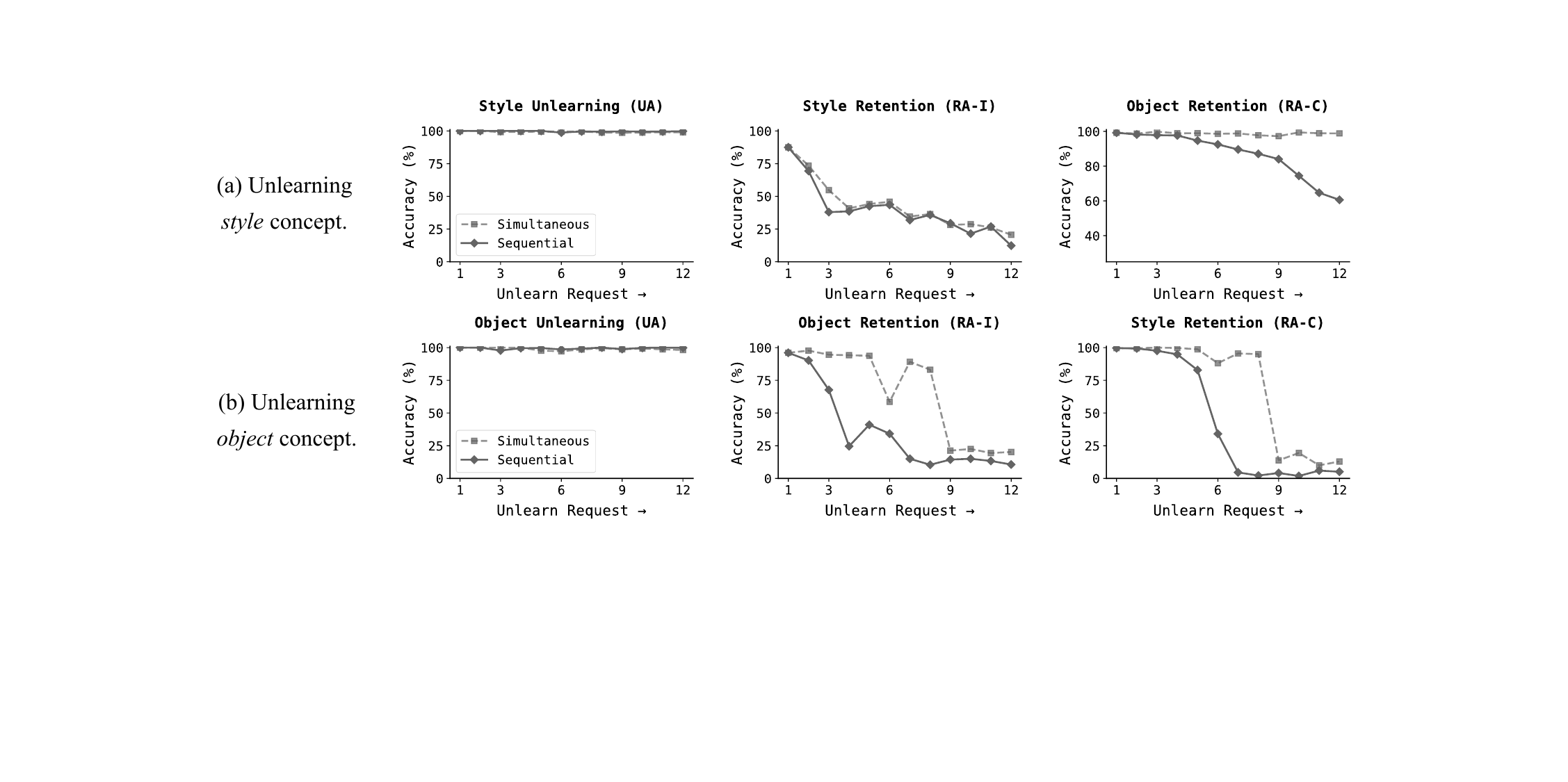}
    \vskip -5pt
    \caption{\small Performance of SculpMem~\citep{li2025sculptingmemorymulticonceptforgetting} under continual unlearning.}
    \label{fig:motivation-sm}
\end{figure}

\begin{figure}[t]
    \centering
    \includegraphics[width=.95\linewidth]{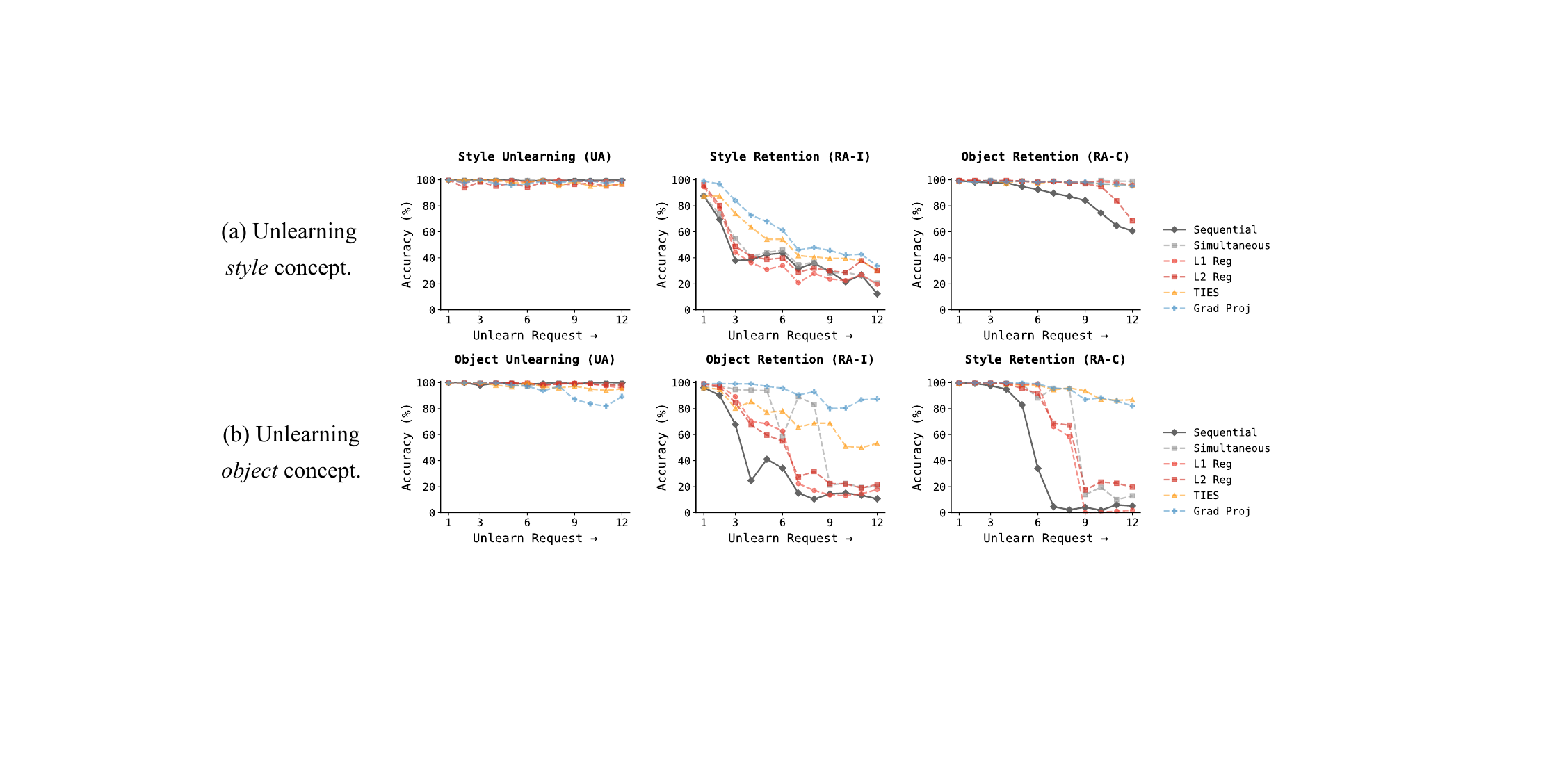}
    \vskip -5pt
    \caption{\small Continual unlearning with SculpMem using add-on regularizers.}
    \label{fig:unlearn-sm-addon}
\end{figure}

We further validate our findings on Sculpting Memory (SculpMem)~\citep{li2025sculptingmemorymulticonceptforgetting}, a recent unlearning method designed for multi-concept unlearning. Despite using a dynamic gradient mask and achieving a much stronger baseline performance, the model still experiences utility collapse after 12 concepts (\autoref{fig:motivation-sm}). Applying our proposed add-on regularizers yields improvements consistent with our previous benchmarks: all methods enhance retention performance over the baseline, with semantic-aware gradient-projection delivering the strongest results (\autoref{fig:unlearn-sm-addon}).

\subsection{Additional Erasure Domain: Celebrity}
\begin{figure}[t]
    \centering
    \includegraphics[width=.9\linewidth]{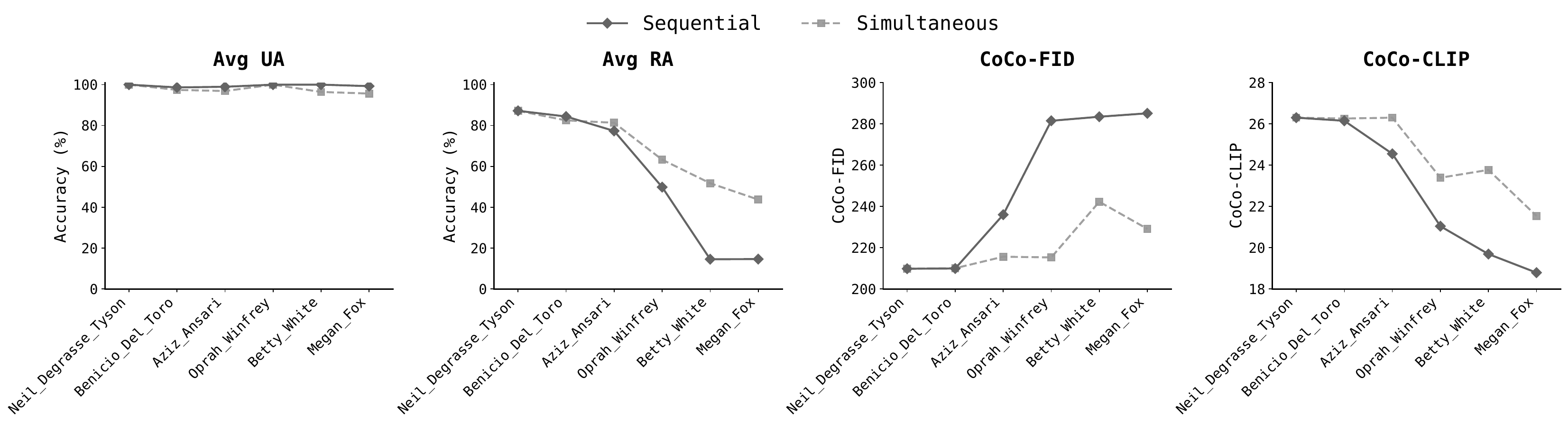}
    \vskip -5pt
    \caption{\small Performance of ConAbl~\citep{ca} under celebrity continual unlearning.}
    \label{fig:celeb-motivation-ca}
\end{figure}

\begin{figure}[b]
    \centering
    \includegraphics[width=.95\linewidth]{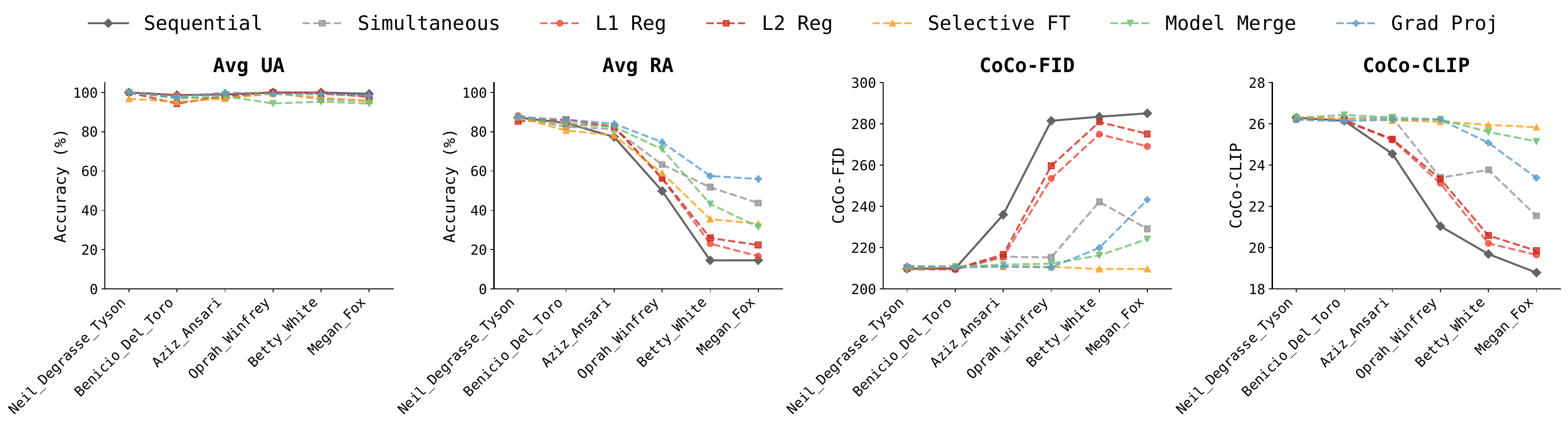}
    \vskip -5pt
    \caption{\small Celebrity continual unlearning with the ConAbl algorithm using add-on mechanisms.}
    \label{fig:celeb-unlearn-ca-addon}
\end{figure}

\begin{wrapfigure}{R}{.50\textwidth}
\vskip -20pt
\centering
\minipage{1\linewidth}
\includegraphics[width=\linewidth]{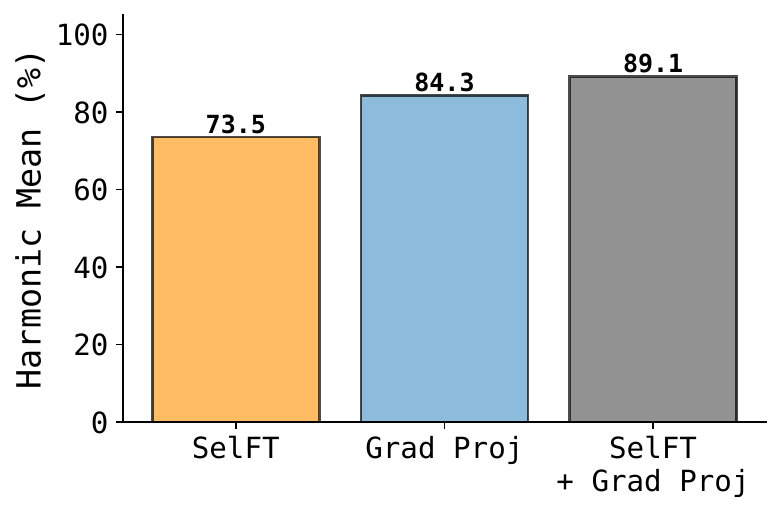}
\endminipage
\vskip-10pt
\caption{\small Gradient-projection is compatible with add-on regularizers, yielding additional gains when combining them. The y-axis shows the harmonic mean of UA, RA-I, and RA-C, using ConAbl for \emph{celebrity} unlearning.}
\vskip-5pt
\label{fig:celeb-complementary-ca}
\end{wrapfigure}

To demonstrate that our findings generalize beyond style and object erasure, we present results for celebrity (identity-based) erasure. For the experimental setup, we select a sequence of 6 random celebrities to unlearn and an additional 6 for the held-out retention set. We employ the GIPHY celebrity classifier to measure unlearning accuracy (classifier error on unlearned celebrities) and retention accuracy (classifier accuracy on held-out celebrities). To further evaluate general retention performance, we generate 5,000 images using MS-COCO~\citep{lin2014microsoft} prompts and report both FID and CLIP Score. 

Consistent with our previous findings, ConAbl alone experiences severe utility degradation after just 6 celebrities, as shown in \autoref{fig:celeb-motivation-ca}. Furthermore, all proposed add-on regularizers improve retention capabilities without sacrificing unlearning accuracy (\autoref{fig:celeb-unlearn-ca-addon}). The greatest performance gains come from combining our semantic-aware gradient-projection method with SelFT (\autoref{fig:celeb-complementary-ca}), demonstrating that add-on regularizers can be effectively combined for enhanced performance across different erasure settings.

\subsection{Additional Architecture: SDXL}
\begin{figure}[h]
    \centering
    \includegraphics[width=.9\linewidth]{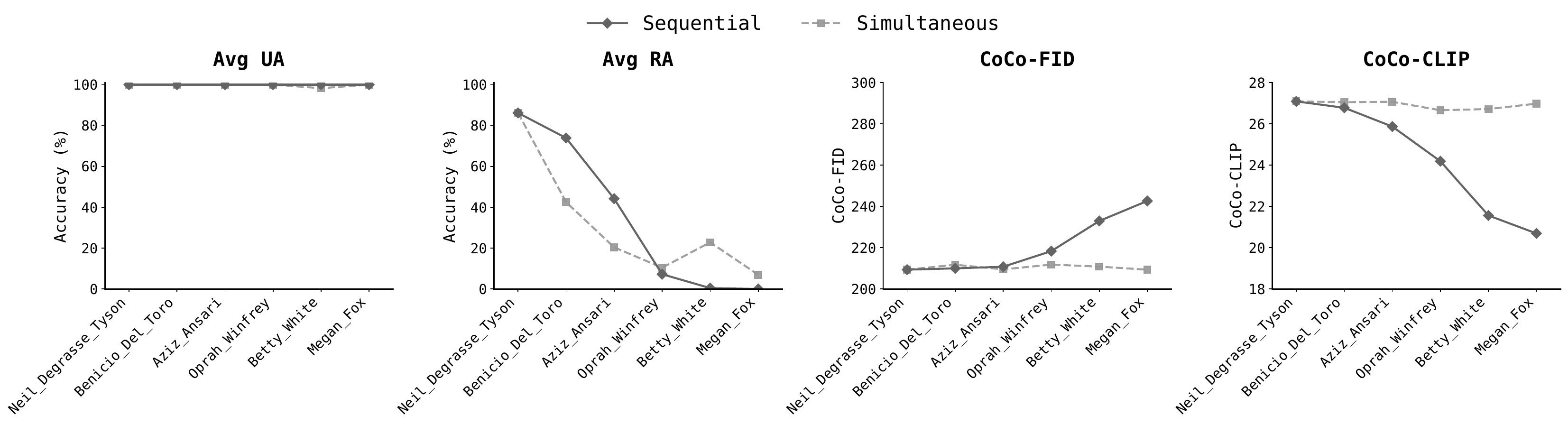}
    \vskip -5pt
    \caption{\small Performance of ESD~\citep{gandikota2023erasing} under celebrity continual unlearning with SDXL.}
    \label{fig:celeb-motivation-esd-sdxl}
\end{figure}

\begin{figure}[h]
    \centering
    \includegraphics[width=.95\linewidth]{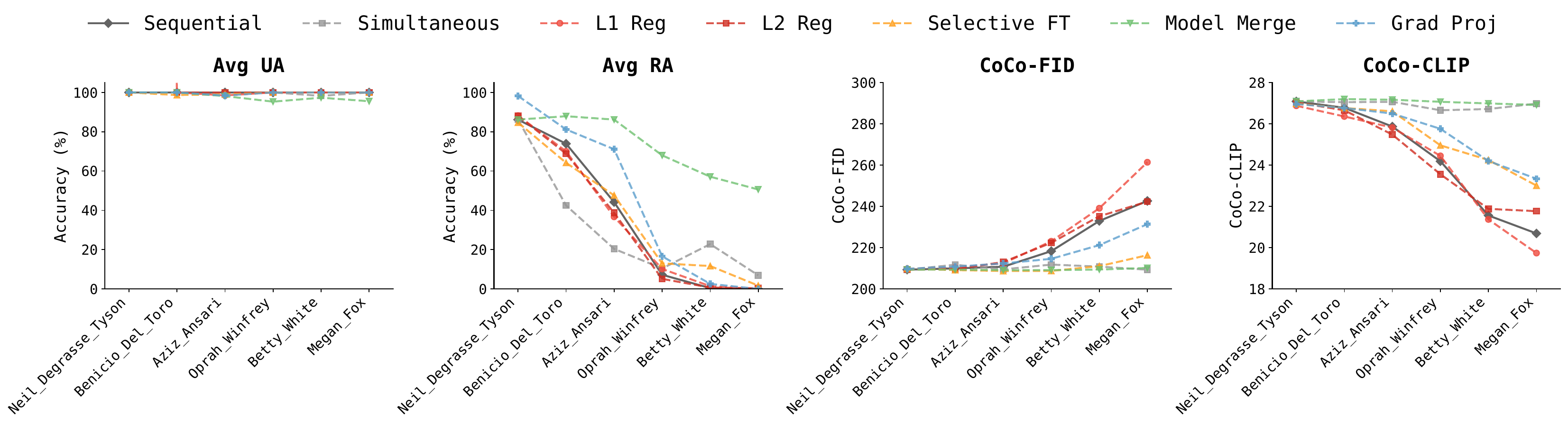}
    \vskip -5pt
    \caption{\small Celebrity continual unlearning with the ESD algorithm using add-on mechanisms on SDXL.}
    \label{fig:celeb-unlearn-esd-sdxl-addon}
\end{figure}

\begin{wrapfigure}{R}{.50\textwidth}
\vskip -20pt
\centering
\minipage{1\linewidth}
\includegraphics[width=\linewidth]{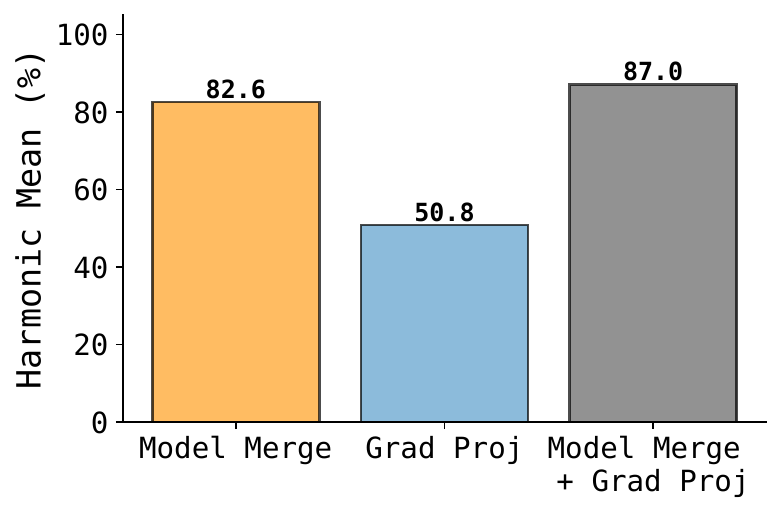}
\endminipage
\vskip-10pt
\caption{\small Gradient-projection is compatible with add-on regularizers, yielding additional gains when combining them. The y-axis shows the harmonic mean of UA, RA-I, and RA-C, using ESD~\citep{esd} with \emph{SDXL} for \emph{celebrity} unlearning.}
\vskip-5pt
\label{fig:celeb-complementary-esd-sdxl}
\end{wrapfigure}

Next, to demonstrate that our findings generalize to different architectures, we present results for SDXL using ESD~\citep{gandikota2023erasing} for celebrity erasure. We adopt ESD rather than ConAbl or SculpMem because, to our knowledge, these methods do not provide official SDXL implementations. We continue with celebrity erasure instead of UnlearnCanvas~\citep{zhangunlearncanvas}, as the latter requires an SDXL checkpoint fine-tuned on its benchmark styles and objects, which is not publicly available.

As shown in \autoref{fig:celeb-motivation-esd-sdxl}, ESD also experiences utility collapse in the celebrity erasure domain. As shown in \autoref{fig:celeb-unlearn-esd-sdxl-addon}, gradient-projection outperforms L1, L2, and SelFT but is surprisingly outperformed by Model Merge. However, \autoref{fig:celeb-complementary-esd-sdxl} demonstrates that gradient projection can be combined with model merge for further performance gains.

\section{Extended Theoretical Support}
\label{-sec: theory}
\subsection{Full Derivation}
Let $\theta^\star$ denote the model obtained from $\theta^\dagger$ after unlearning a concept (style or object), regardless of the unlearning method (\eg, ConAbl, SculpMem, ESD), strategy (sequential or simultaneous), and add-on regularizers (\eg, Gradient Projection, SelFT).

Let $\mathcal{C}$ be the set of all concepts learned by the diffusion model $\theta^\dagger$. Let $\mathcal{C}^f \subseteq \mathcal{C}$ be the set of concepts to be unlearned, and let $\mathcal{C}^r = \mathcal{C} \setminus \mathcal{C}^f$ be the remaining concepts to be retained.

Let $L^{r}(\theta;\mathcal{C}^r)$ denote the retention loss for preserving concepts in $\mathcal{C}^r$, instantiated in practice as the standard diffusion training objective.

We begin by applying a second-order Taylor expansion of the retention loss around $\theta^\dagger$ to approximate its value for any unlearned model $\theta^\star$.

$$L^{r}(\theta^{\star};\mathcal{C}^{r}) = L^{r}(\theta^{\dagger};\mathcal{C}^{r}) + \nabla L^{r}(\theta^{\dagger};\mathcal{C}^{r})^{T}(\theta^{\star} - \theta^{\dagger}) + \frac{1}{2} (\theta^{\star} - \theta^{\dagger})^{T} H(\theta^{\dagger})(\theta^{\star} - \theta^{\dagger})$$

$$L^{r}(\theta^{\star};\mathcal{C}^{r}) - L^{r}(\theta^{\dagger};\mathcal{C}^{r}) = \nabla L^{r}(\theta^{\dagger};\mathcal{C}^{r})^{T}(\theta^{\star} - \theta^{\dagger}) + \frac{1}{2} (\theta^{\star} - \theta^{\dagger})^{T} H(\theta^{\dagger})(\theta^{\star} - \theta^{\dagger})$$

The linear term, given by the inner product of the gradient and the parameter change, can be bounded via the Cauchy–Schwarz inequality:
$$|\langle\nabla L^{r}(\theta^{\dagger};\mathcal{C}^{r}),(\theta^{\star} - \theta^{\dagger})\rangle| \le ||\nabla L^{r}(\theta^{\dagger};\mathcal{C}^{r})|| \cdot ||(\theta^{\star} - \theta^{\dagger})||$$

The quadratic term can similarly be bounded through the repeated application of the Cauchy–Schwarz inequality, followed by the definition of the operator norm of the Hessian:

$$|\frac{1}{2} (\theta^{\star} - \theta^{\dagger})^{T} H(\theta^{\dagger})(\theta^{\star} - \theta^{\dagger})| = \frac{1}{2}|\langle (\theta^{\star} - \theta^{\dagger}), H(\theta^{\dagger})(\theta^{\star} - \theta^{\dagger}) \rangle|$$

$$\frac{1}{2}|\langle (\theta^{\star} - \theta^{\dagger}), H(\theta^{\dagger})(\theta^{\star} - \theta^{\dagger}) \rangle| \le \frac{1}{2}||\theta^{\star} - \theta^{\dagger}|| \cdot ||H(\theta^{\dagger})(\theta^{\star} - \theta^{\dagger})||$$

$$\frac{1}{2}||\theta^{\star} - \theta^{\dagger}|| \cdot ||H(\theta^{\dagger})(\theta^{\star} - \theta^{\dagger})|| \le \frac{||H(\theta^{\dagger})||}{2} \cdot||\theta^{\star} - \theta^{\dagger}||^{2}$$

By substituting the bounds on the linear and quadratic terms, we obtain an overall bound on the change in retention loss between $\theta^{\star}$ and $\theta^{\dagger}$

$$|L^{r}(\theta^{\star};\mathcal{C}^{r}) - L^{r}(\theta^{\dagger};\mathcal{C}^{r})| \le ||\nabla L^{r}(\theta^{\dagger};\mathcal{C}^{r})|| \cdot ||(\theta^{\star} - \theta^{\dagger})|| + \frac{||H(\theta^{\dagger})||}{2} \cdot ||\theta^{\star} - \theta^{\dagger}||^{2}$$

This bound resembles a Lipschitz-type continuity condition:
$$|L^{r}(\theta^{\star};\mathcal{C}^{r}) - L^{r}(\theta^{\dagger};\mathcal{C}^{r})| \le L \cdot ||(\theta^{\star} - \theta^{\dagger})|| + \frac{M}{2} \cdot ||\theta^{\star} - \theta^{\dagger}||^{2}$$

Thus, the change in retention loss grows proportionally with the parameter difference, with constants $L$ and $M$ bounding the contributions of the linear and quadratic terms, respectively.

The constants $L$ and $M$ are expected to be small when the pre-trained model is near a stationary point of the retention objective, and the local curvature of $L^r$ is low. In the next section, we provide an empirical approximation of $M$.

\subsection{Empirical Support}

We empirically approximate the local gradient-Lipschitz constant of the retention objective via finite-difference gradient variation on held-out retention concepts.
For each reference model $\theta$ (the base model and independently unlearned checkpoints), we sample perturbations
\[
\delta = \epsilon \|\theta\|_2 u,
\]
where $u$ is a random unit vector in the analyzed UNet subspace, and evaluate
\[
\hat M(\epsilon)
\;=\;
\frac{\|\nabla L^r(\theta+\delta;\mathcal C^r)-\nabla L^r(\theta;\mathcal C^r)\|_2}{\|\delta\|_2}.
\]
To reduce estimator noise, both gradients in each pair are computed on the same minibatch, diffusion noise realization, and timestep draw.

Across a logarithmic sweep of perturbation scales, we observe a consistent monotonic trend: in the smallest-perturbation regime, $\hat M(\epsilon)$ remains uniformly small across models, while larger perturbations yield larger $\hat M(\epsilon)$ and variance (\autoref{fig: curvature empirical support}). This indicates that the retention objective is locally flat around the reference solution, but becomes increasingly nonlinear farther away.

These results support a local smoothness characterization: near the pretrained solution, the quadratic term in the Taylor expansion is weak; as perturbation magnitude increases, curvature effects become non-negligible and increasingly influence retention-loss change.

\begin{figure}[t]
    \centering
    \includegraphics[width=1.0\linewidth]{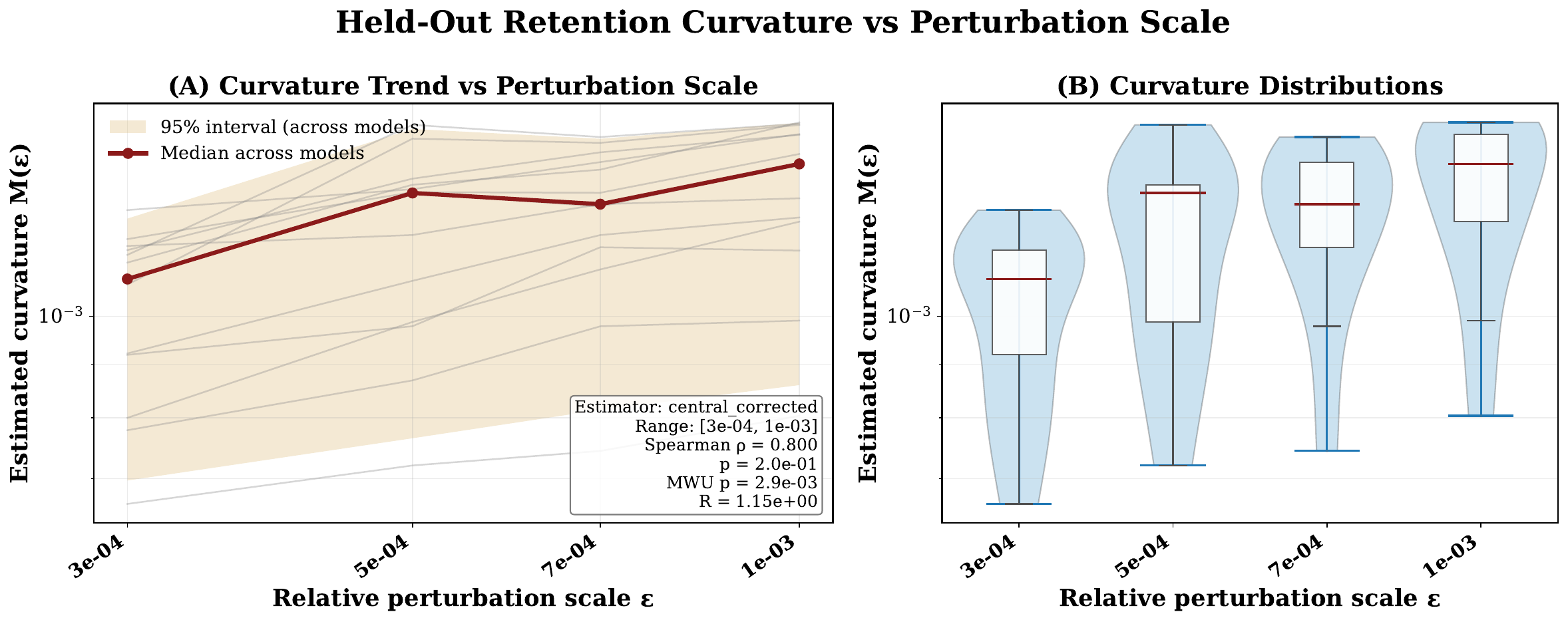}
    \caption{
    Estimated finite-difference retention curvature proxy under random UNet perturbations.
    For relative scale $\epsilon$, we compute
    $\hat M(\epsilon)=\|\nabla L^r(\theta+\delta;\mathcal C^r)-\nabla L^r(\theta;\mathcal C^r)\|_2/\|\delta\|_2$
    using matched minibatch, diffusion noise, and timesteps.
    (A) Per-model trends (thin lines) with median and 95\% interval across models.
    (B) Distribution of $\hat M(\epsilon)$ at each scale.
    $\hat M(\epsilon)$ increases with perturbation scale, indicating that the retention objective is locally flat near the reference model and exhibits stronger nonlinear curvature farther away.
    }
    \label{fig: curvature empirical support}
\end{figure}

\section{Extended: Concept Erasure is All-or-Nothing}
\label{-sec: ext all or nothing}

We further analyze the interpolation behavior between the pre-trained model and its unlearned counterpart. Let $\theta^{\alpha} = (1-\alpha)\theta^{\dagger} + \alpha \cdot \theta$ denote linear interpolation in parameter space. Across a wide range of $\alpha$, generated outputs for the target concept remain visually indistinguishable from the base model. However, once $\alpha$ crosses a concept-specific threshold, the target is abruptly suppressed \autoref{fig: sharp_transition_qual}. This sharp transition indicates that erasure behaves in an all-or-nothing manner rather than degrading smoothly. This observation is further supported by classifier accuracy, which stays near 100\% across most interpolation values and then sharply collapses to 0\% at the transition point (\autoref{fig: sharp_transition_quan}).

\begin{figure}[t]
    \centering
        \includegraphics[
        width=1.0\linewidth,
        trim=0 0 0 2cm,
        clip
    ]{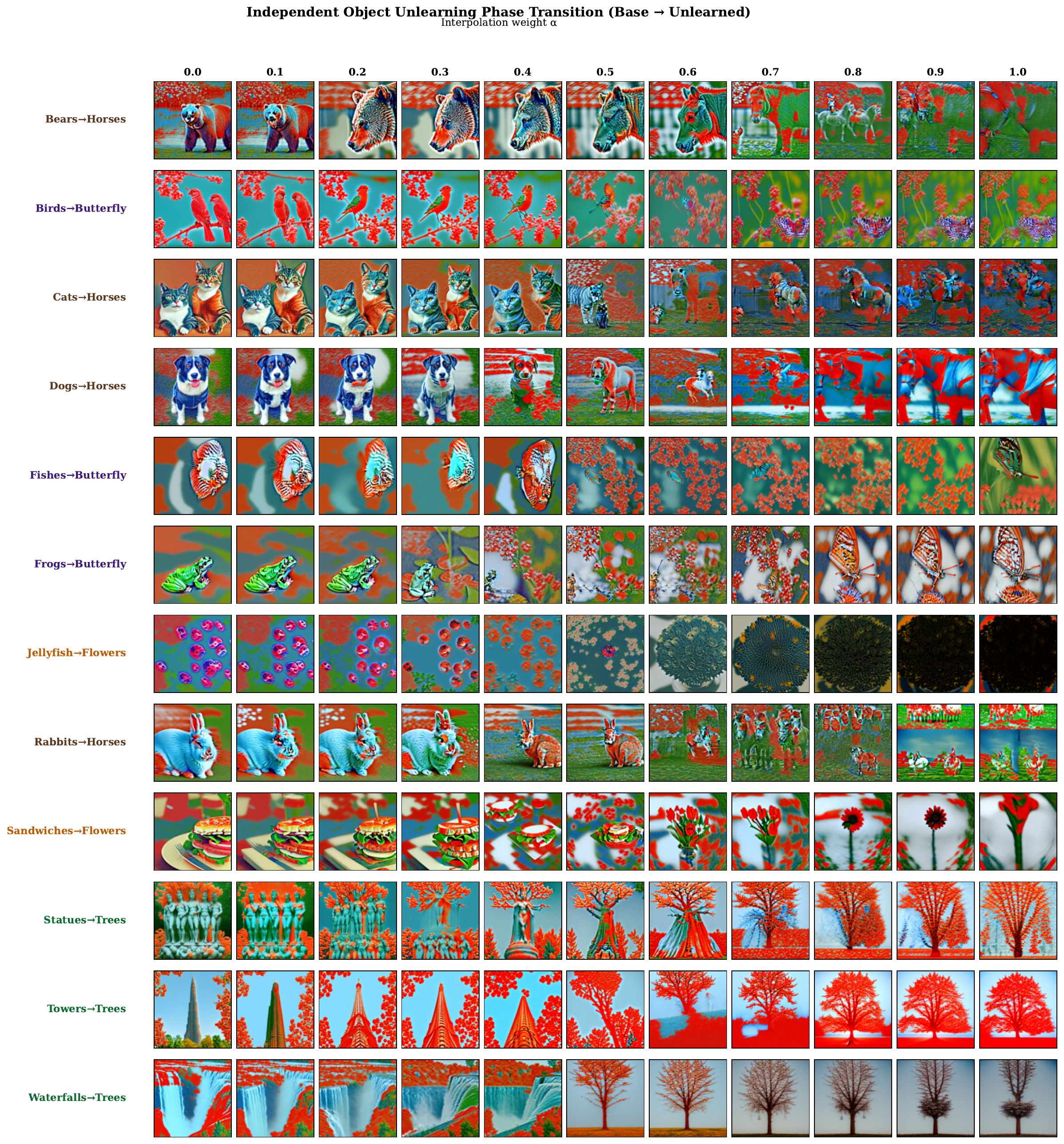}
    \caption{
    Qualitative interpolation between the pre-trained and unlearned models. For increasing interpolation coefficient $\alpha$, generated images remain visually unchanged until a critical threshold is reached, after which the target concept is abruptly suppressed.}
    \label{fig: sharp_transition_qual}
\end{figure}

\begin{figure}[t]
    \centering
        \includegraphics[
        width=1.0\linewidth,
        trim=0 0 0 2cm,
        clip
    ]{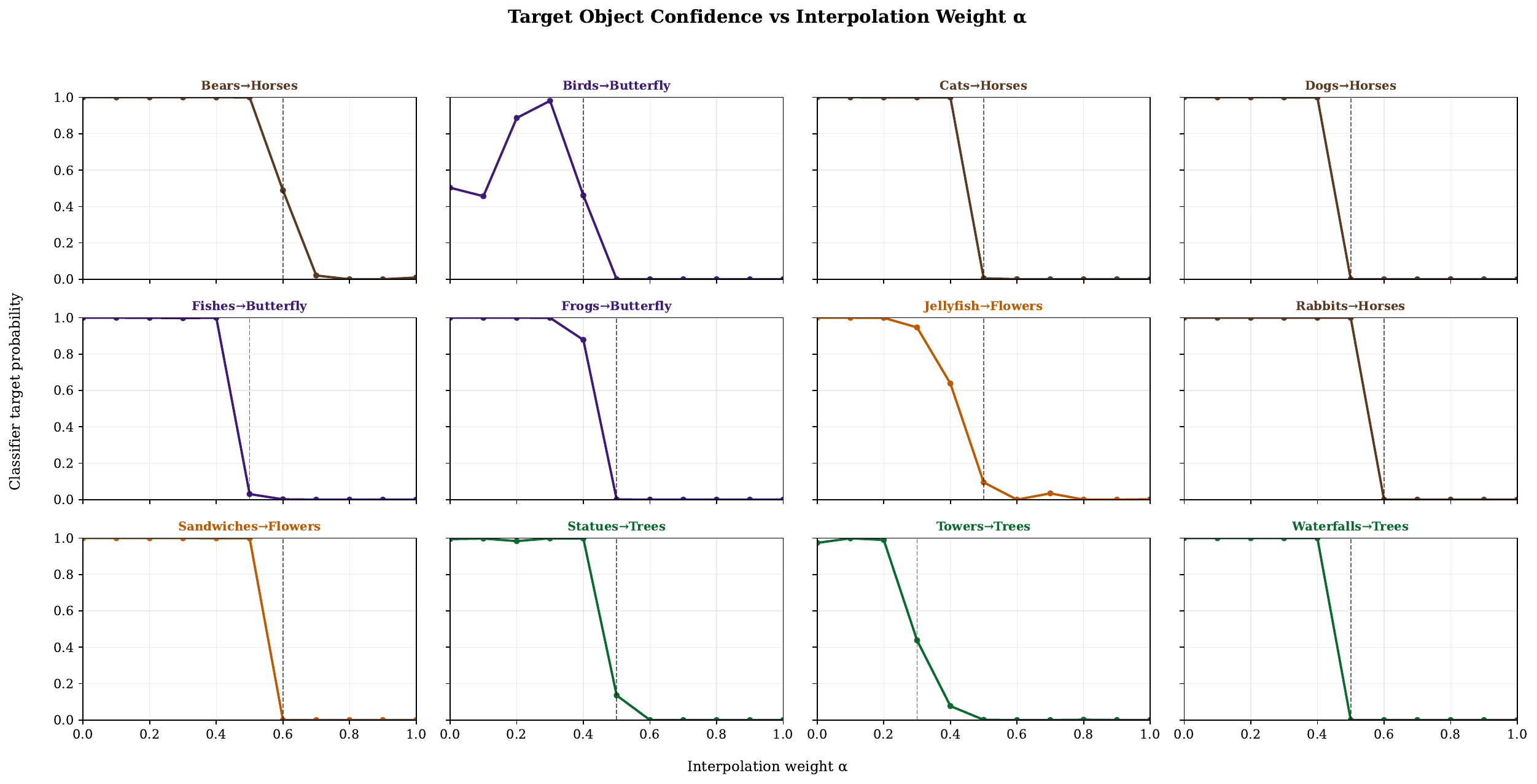}
    \caption{
    Quantitative evaluation of interpolation. Unlearning accuracy and retention metrics exhibit sharp transitions at concept-specific thresholds, supporting that erasure occurs as a sharp change rather than a gradual decline.}
    \label{fig: sharp_transition_quan}
\end{figure}

\section{Extended: Parameter Drift is Intrinsic to Sequential Unlearning}
\label{-sec: ext drift intrinsic}

\begin{figure}[t]
    \centering
    \includegraphics[
        width=1.0\linewidth,
        trim=0 0 0 0cm,
        clip
    ]{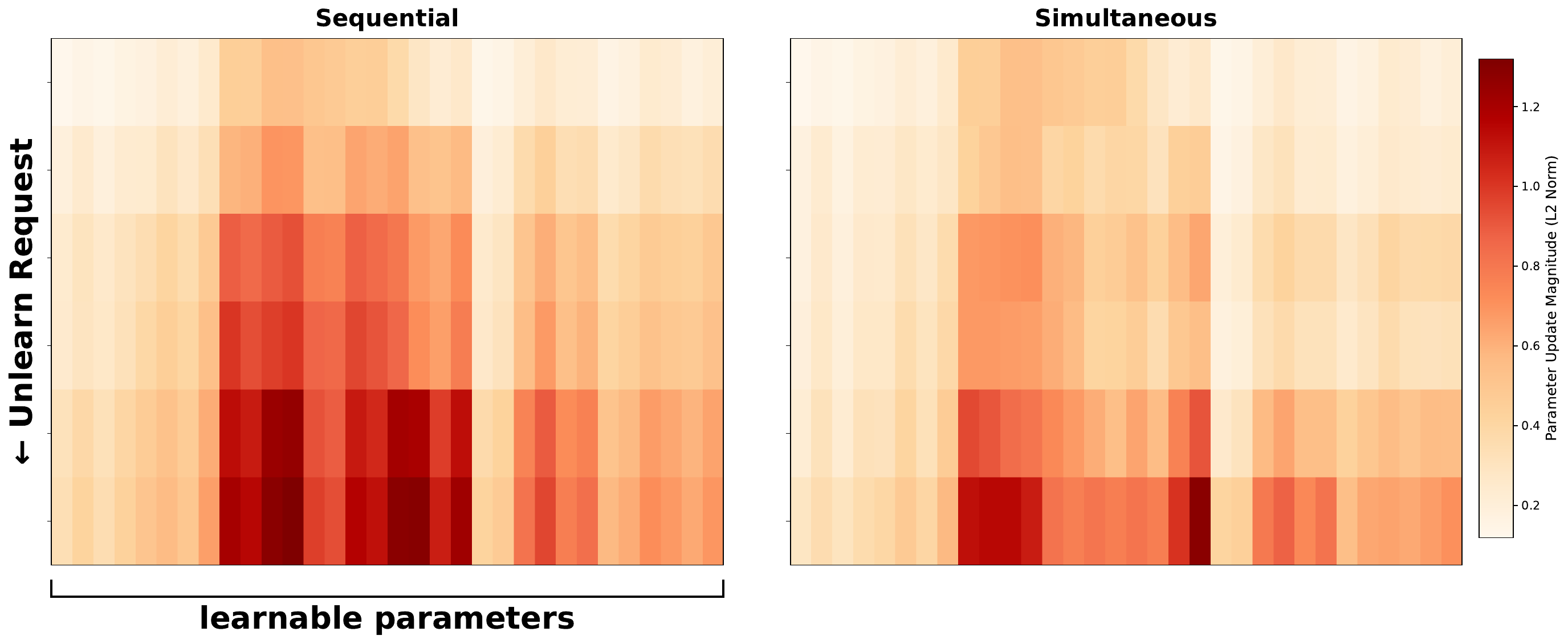}
    \caption{Parameter drift heatmap for ConAbl comparing sequential versus simultaneous unlearning when early stopping is applied to both strategies.} 
    \label{fig: heatmap early-stopping}
\end{figure}

\begin{table}[b]
\centering
\begin{tabular}{ccc}
\toprule
\textbf{Num Unlearned} & \textbf{Sequential Steps} & \textbf{Simultaneous Steps} \\
\midrule
1 & 400 & 300 \\
2 & 700 & 400 \\
3 & 1200 & 800 \\
4 & 1300 & 700 \\
5 & 1900 & 1300 \\
6 & 2100 & 2100 \\
\bottomrule
\end{tabular}
\caption{Cumulative optimization steps for sequential and simultaneous unlearning when early stopping is applied to both settings. Each entry reports the total number of training steps required to erase the number of concepts indicated in the corresponding row. The cumulative step counts become equal by the sixth concept.}
\label{tab: heatmap-early-stopping-iterations}
\end{table}

In our original experimental design, sequential unlearning used a fixed number of iterations based on the default values recommended in the original unlearning method papers. For simultaneous unlearning, the relationship between training iterations and the number of concepts to unlearn was unknown. We therefore employed early stopping, evaluating unlearning accuracy every 100 iterations and terminating training once the model achieved 99\% unlearning accuracy on a validation set. This approach also allowed simultaneous unlearning to serve as an initial upper-bound performance baseline for evaluating our sequential unlearning methods combined with the proposed add-ons.

However, to verify that the greater parameter drift in sequential versus simultaneous unlearning (\autoref{fig: heatmap}) is not caused by early stopping, we conduct a controlled experiment where early stopping is also applied to sequential unlearning. Specifically, we continue sequential unlearning until the cumulative number of optimization steps taken by simultaneous unlearning matches or exceeds those taken in the sequential setting (\autoref{tab: heatmap-early-stopping-iterations}). Our revised heatmap (\autoref{fig: heatmap early-stopping}) shows the same trend: parameter drift accumulates much faster in sequential than in simultaneous unlearning.

\section{Simultaneous Training Costs}
\label{-sec: simultaneous training costs}

\begin{figure}[t]
    \centering
    \includegraphics[width=0.5\linewidth]{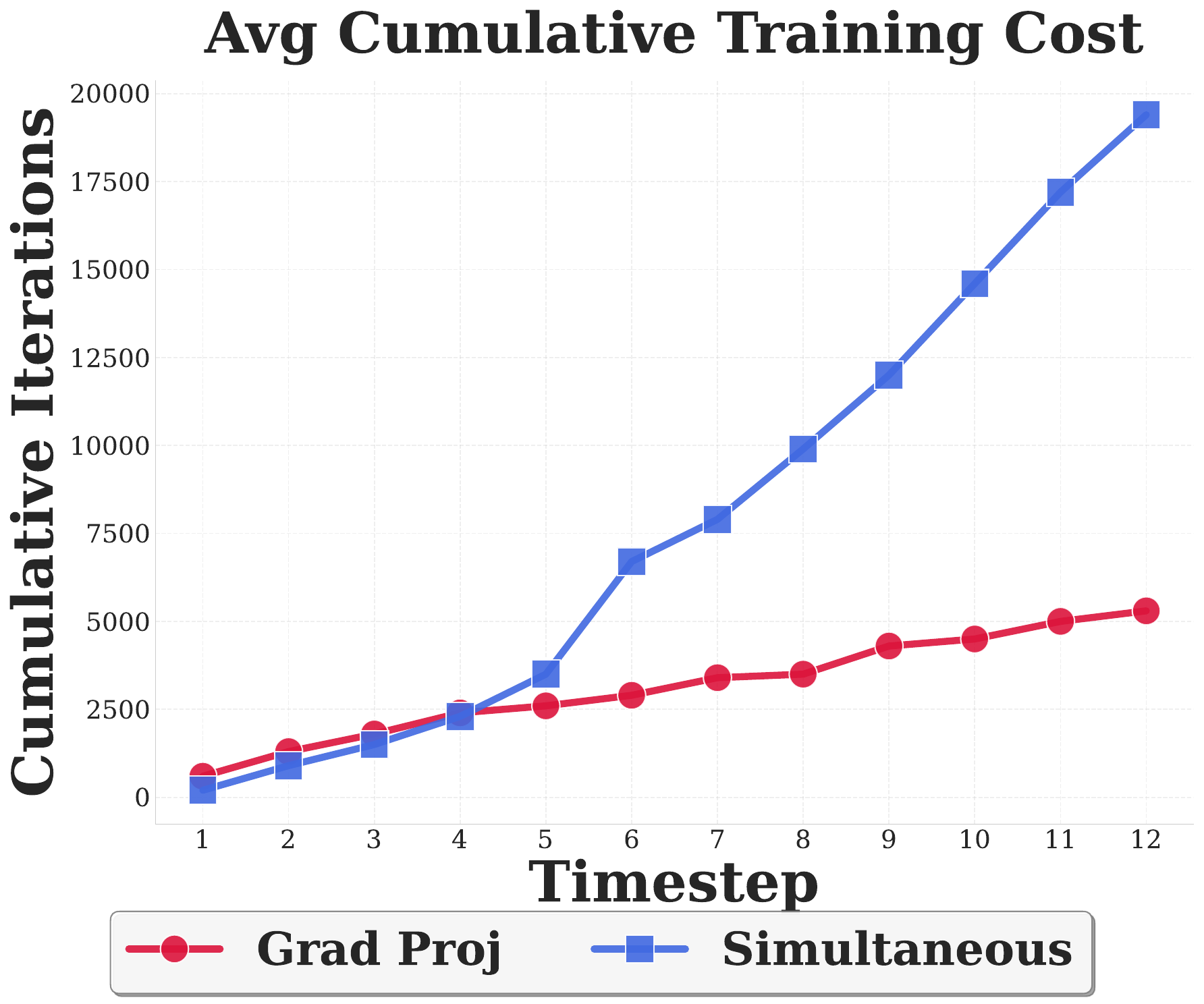}
    \caption{
    Cumulative training iterations for style unlearning with ConAbl, comparing sequential unlearning augmented with our best-performing add-on regularizer, semantic-aware gradient-projection, against simultaneous unlearning. Sequential unlearning exhibits near-linear growth in cumulative cost, whereas simultaneous unlearning incurs superlinear growth.}
    \label{fig: style sim train costs}
\end{figure}

To compare the cumulative training costs of simultaneous and sequential unlearning, we perform style unlearning using ConAbl and utilize our best-performing add-on regularizer, semantic-aware gradient-projection. For fair comparison, we apply early stopping to both sequential and simultaneous unlearning, evaluating every 100 iterations and stopping once unlearning accuracy reaches 99\%. As seen in \autoref{fig: style sim train costs}, sequential unlearning shows near-linear growth in training costs relative to the number of unlearning requests, while simultaneous unlearning exhibits superlinear growth. This is because simultaneous unlearning requires training from the base model at each unlearning request, thereby incurring repeated computation costs of re-unlearning previous requests. A comparison of unlearning and retention performance can be found in \autoref{fig:unlearn-ca-addon}.

\section{Detailed Related Work}
\label{-sec: related}

\subsection{From Continual Learning to Continual Unlearning}
Continual learning focuses on enabling models to acquire new knowledge incrementally without forgetting previously learned information—a phenomenon known as catastrophic forgetting \citep{mai2022online, wang2024comprehensive, lomonaco2022cvpr}. Existing approaches to mitigate forgetting in continual learning can broadly be classified into four categories: (1) \textit{regularization-based methods}, which incorporate explicit regularization terms to constrain parameter updates \citep{kirkpatrick2017overcoming, zenke2017continual}; (2) \textit{replay-based methods}, which either store a limited set of previous examples in memory buffers \citep{mai2021supervised, shim2021online} or employ generative models to synthesize replay samples \citep{shin2017continual}; (3) \textit{optimization-based methods}, which directly manipulate optimization procedures through techniques such as gradient projection \citep{chaudhry2018efficient} or meta-learning \citep{javed2019meta}; and (4) \textit{architecture-based methods}, which introduce task-specific adaptive parameters to the model \citep{mallya2018piggyback}.

Although continual unlearning fundamentally differs from continual learning, key concepts from continual learning remain valuable and adaptable \citep{heng2023selective}. In this work, we leverage ideas inspired by regularization-based methods from continual learning, introducing L1/L2 regularization baselines. Additionally, while selective parameter updates appear in both paradigms, continual learning methods update the least important parameters to preserve prior knowledge \citep{mazumder2021few}. In contrast, our proposed Selective Fine-Tuning (SelFT) approach identifies and updates the most significant parameters to facilitate effective unlearning.

By bridging insights from continual learning to continual unlearning, our research sets the stage for future investigations. We encourage subsequent studies to further integrate and refine continual learning strategies to address the nuanced challenges of continual unlearning effectively.

\subsection{Selective Fine-tuning}
Selecting the most important parameters within a model for a specific task has been extensively investigated for different purposes. To enhance time and memory efficiency, weight pruning methods commonly utilize gradient-based metrics to quantify parameter importance, enabling the removal of redundant parameters \citep{lee2018snip, molchanov2016pruning, tanaka2020pruning}. A similar concept underlies model editing techniques, which aim to precisely locate and alter specific knowledge within a model by directly modifying relevant weights \citep{dai2021knowledge, patil2023can, de2021editing}. Recent work has extended these ideas to unlearning in diffusion models \citep{fan2023salun, nguyen2024unveiling}. Our findings demonstrate that incorporating selective fine-tuning into existing unlearning methodologies significantly enhances their performance in continual unlearning scenarios.

\subsection{Semantic Awareness in Unlearning} 
Most unlearning work emphasizes preserving model utility during concept removal. Beyond aggregate utility, it is equally important to identify which concepts are most susceptible to collateral degradation. \cite{bui2025fantastic} investigate cross-concept effects and report that unlearning a concept disproportionately degrades semantically similar concepts. Complementarily, \cite{bui2024erasing} show that explicitly preserving closely related concepts yields larger overall utility retention. In contrast, we adopt a regularization perspective: we demonstrate that text-embedding similarity is a strong predictor of degradation and link this behavior to the cross-attention mechanism in diffusion models, where $W_K,W_V$ couple text directions with image latents.

\subsection{Model Merging}
Early research on model merging focused on averaging parameters of multiple models trained with varied hyperparameters on identical datasets to enhance generalization \citep{wortsman2022model}. Concurrently, this strategy has been extended to multi-task learning, where models trained on diverse vision tasks have their weights averaged to achieve improved performance \citep{matena2022merging, ilharco2022editing}. Since then, numerous advanced methods have emerged to refine the basic merging approach (fine-tuning followed by merging), including linearized fine-tuning \citep{ortiz2023task}, sparsifying update vectors \citep{davari2024model, yu2024language}, and selectively merging subsets of weights \citep{yadav2023ties, stoica2023zipit}.

Recent concurrent studies have also explored model merging techniques specifically tailored for unlearning in large language models (LLMs) \citep{kuo2025exact, kadhe2024split}. However, to the best of our knowledge, this paper presents the \textbf{first} exploration of model merging for unlearning within the context of text-to-image generation.
\end{document}